\newcommand\cH{\mathcal{H}}
\newcommand\cX{\mathcal{X}}
\newcommand\cY{\mathcal{Y}}
\newcommand\cD{\mathcal{D}}
\newcommand{\cQ}{\mathcal{Q}}
\newcommand{\cJ}{\mathcal{J}}
\newcommand{\A}{\mathcal{A}}
\newcommand{\cA}{\mathcal{A}}
\newcommand{\E}{\mathop{\mathbb{E}}}
\DeclareMathOperator*{\argmin}{\mathrm{argmin}}
\newtheorem{theorem}{Theorem}[section]
\newtheorem{corollary}[theorem]{Corollary}
\newtheorem{remark}[theorem]{Remark}
\newtheorem{definition}[theorem]{Definition} 
\newtheorem{lemma}[theorem]{Lemma}
\title{Individually Fair Learning with One-Sided Feedback}
\author{
    Yahav Bechavod\thanks{School of Computer Science and Engineering, The Hebrew University. Email: \texttt{yahav.bechavod@cs.huji.ac.il}.}
    \and
    Aaron Roth\thanks{Department of Computer and Information Sciences, University of Pennsylvania. Email: \texttt{aaroth@cis.upenn.edu}.}}%
\date{\today}
\begin{document}

\maketitle

\begin{abstract}
We consider an online learning problem with one-sided feedback, in which the learner is able to observe the true label only for positively predicted instances. On each round, $k$ instances arrive and receive classification outcomes according to a randomized policy deployed by the learner, whose goal is to maximize accuracy while deploying \emph{individually fair} policies. We first extend the framework of \cite{BechavodJW20}, which relies on the existence of a human fairness auditor for detecting fairness violations, to instead incorporate feedback from dynamically-selected panels of multiple, possibly inconsistent, auditors. We then construct an efficient reduction from our problem of online learning with one-sided feedback and a panel reporting fairness violations to the contextual combinatorial semi-bandit problem (\cite{Cesa-BianchiL09,GyorgyLLO07}). Finally, we show how to leverage the guarantees of two algorithms in the contextual combinatorial semi-bandit setting: Exp2 \citep{BubeckCK12} and the oracle-efficient Context-Semi-Bandit-FTPL \citep{SyrgkanisKS16}, to provide multi-criteria no regret guarantees simultaneously for accuracy and fairness. Our results eliminate two potential sources
of bias from prior work: the “hidden outcomes” that are not available to an algorithm operating in the full information setting, and human biases that might be present in any single human auditor, but can be mitigated by selecting a well chosen panel.
\end{abstract}

\newpage
\tableofcontents
\newpage

\section{Introduction}
When making many high stakes decisions about people, we receive only \emph{one-sided} feedback---often we are only able to observe the outcome for people for whom we make a favorable decision. For example, we only observe the repayment history for applicants we approve for a loan---not for those we deny. We only observe the success or lack thereof for employees we hire, not for those that we pass on. We only observe the college GPA for those applicants that we admit to college, not to those we reject---and so on. In all of these domains, fairness is a major concern in addition to accuracy. Nevertheless, the majority of the literature on fairness in machine learning does not account for this ``one-sided'' feedback structure, operating either in a batch setting, a full information online setting, or in a more standard bandit learning setting. But when we make sequential decisions with one-sided feedback, it is crucial to explicitely account for the form of the feedback structure to avoid feedback loops that may amplify and disguise historical bias.

The bulk of the literature in algorithmic fairness also asks for fairness on a \emph{group} or aggregate level. A standard template for this approach is to select some statistical measure of error (like false positive rate, false negative rates, or raw error rates), a partition of the data into groups (often along the lines of some ``protected attribute''), and then to ask that the statistical measure of error is approximately equalized across the groups. Because these guarantees bind only over averages over many people, they promise little to individuals, as initially pointed out by Dwork et al.'s ``catalogue of evils'' \citep{DworkHPRZ12}.

In an attempt to provide meaningful guarantees on an individual level, \citet{DworkHPRZ12} introduced the notion of individual fairness, which informally asks that ``similar individuals be treated similarly''. In their conception, this is a Lipschitz constraint imposed on a randomized classifier, and who is ``similar'' is defined by a task-specific similarity metric. Pinning down such a metric is the major challenge with using the framework of individual fairness. \cite{GillenJKR18} proposed that feedback could be elicited in an online learning setting from a human auditor who ``knows unfairness when she sees it'' (and implicitly makes judgements according to a similarity metric), but cannot enunciate a metric --- she can only identify specific violations of the fairness constraint. Recently, \citet{BechavodJW20} gave an algorithm for operating in this setting---with full information--- that was competitive with the optimal fair model, while being able to learn not to violate the notion of individual fairness underlying the feedback of a single auditor. 

Our work extends that of \citet{GillenJKR18,BechavodJW20} in two key ways. First, we remove the assumption of a single, consistent auditor: we assume we are given an adaptively chosen \emph{panel} of human auditors who may have different conceptions of individual fairness and may be making inconsistent judgements (we aim to be consistent with plurality judgements of such a panel). Second, we dispense with the need to operate in a full information setting, and give oracle efficient algorithms that require only one-sided feedback. We give simultaneous no-regret guarantees for both classification error and fairness violation, with respect to models that are individually fair in hindsight (i.e. given the realization of the panels of fairness auditors who define our conception of fairness). Together these improvements eliminate two potential sources of bias from prior work: the ``hidden outcomes'' that are not available to an algorithm operating in the full information setting, and human biases that might be present in any single human auditor, but can be mitigated by selecting a well chosen panel.

\subsection{Overview of Results}
We define an online learning framework with one-sided label feedback and additional feedback from dynamically chosen panels of auditors regarding fairness violations (we present our formal model in Section  \ref{sec:prelim}). We then cast our learning problem as an optimization problem of a joint objective using a Lagrangian formulation (Section \ref{subsec:lagrangian}). We construct an efficient reduction to the contextual combinatorial semi-bandit setting \citep{Cesa-BianchiL09,GyorgyLLO07}, allowing us to upper bound the Lagrangian regret in our setting (Section  \ref{sec:reduction}). We show that auditing by panels is in fact equivalent to auditing by specific, instance-dependent, ``representative'' auditors, which is a useful technical step in our analysis (Section \ref{subsec:equivalence}). We then show how the Lagrangian regret guarantee can be translated to achieve multi-criteria no regret guarantees, simultaneously for accuracy and fairness (Section \ref{subsec:simultaneous}).
Finally, we show how to leverage the regret guarantees of two algorithms for the contextual combinatorial semi-bandit setting: Exp2 \citep{BubeckCK12} and the oracle-efficient Context-Semi-Bandit-FTPL \citep{SyrgkanisKS16}, to establish no regret guarantees simultaneously for each of accuracy and fairness (Section  \ref{subsec:algorithms}).

\subsection{Related Work}
Our work is  related to two strands of literature: learning with one-sided feedback, and individual fairness in machine learning.  The problem of learning from positive-prediction-only feedback first appeared in \citet{Helmbold}, under the name of ``apple tasting''. Subsequently, \citet{PartialMonitoring} studied a generalization of the one-sided feedback setting, in which the feedback at each round is a function of the combined choice of two players. Follow-up work by \citet{AntosBPS13} showed that it is possible to reduce the online one-sided feedback setting to the better studied contextual bandit problem. \citet{cesa-bianchi06b} focuses on linear models, and propose an active learning approach based on the predictions made by the deployed predictor at each round,  in the face of one-sided feedback. \citet{Sculley07} focused on practical challenges in learning with one-sided feedback in the context of spam filtering, and suggested the utilization of methods form the active learning literature to reduce the label complexity encountered in practice. \citet{JiangJP21} focuses on learning with generalized linear models in an online one-sided feedback setting, and propose a data-driven adaptive approach based on variance estimation techniques. \citet{SelectiveExpert} and \citet{Lakkaraju17} propose techniques for imputing missing labels using feedback from human experts. \citet{Ustun17} and \citet{LakkarajuRudin17} propose statistical techniques for assigning missing labels.

In the context of algorithmic fairness, \citet{Bechavod19} considers a stochastic online setting with one-sided feedback, in which the aim is to learn a binary classifier while enforcing the statistical fairness condition of ``equal opportunity'' \citep{hardt16}. \citet{CostonRC21} operate in a batch setting with potentially missing labels due to one-sided feedback in historical decisions, and attempt to impute missing labels using statistical techniques. \citet{ensign18a} and \cite{elzayn2019fair} focus on the tasks of predictive policing and related resource allocation problems, and give algorithms for these tasks under a censored feedback model. \citet{Kleinberg17} explores techniques to mitigate the problem of one-sided feedback in the context of judicial bail decisions.

\citet{DworkHPRZ12} introduced the notion of individual fairness. In their formulation, a similarity metric is explicitly given, and they ask that predictors satisfy a Lipschitz condition (with respect to this metric) that roughly translates into the condition that ``similar individuals should have similar distributions over outcomes''.
\citet{YonaR18} give a statistical treatment of individual fairness in a batch setting with examples drawn i.i.d. from some distribution, while assuming full access to a similarity metric, and prove PAC-style generalization bounds for both accuracy and individual fairness violations. \citet{Ilvento} suggests learning the similarity metric from human arbiters, using a hybrid model of comparison queries and numerical distance queries. \citet{KimRR18} study a group-based relaxation of individual fairness, while relying on access to an auditor returning unbiased estimates of distances between pairs of individuals. \citet{JungKNRSW21} consider a batch setting, with a fixed set of ``stakeholders'' which provide fairness feedback regarding pairs of individuals in a somewhat different model of fairness, and give oracle-efficient algorithms and generalization bounds. \citet{GuptaK19} study a time-dependent variant of individual fairness they term ``individual fairness in hindsight''. \citet{YurochkinBS20} consider a variant of individual fairness which asks for invariance of the learned predictors with respect to ``sensitive'' variables. \citet{MukherjeeYBS20} investigate ways to learn the metric from data. \citet{LahotiGW19} focus on the task of learning individually fair representations. \citet{JosephKMR16,JosephKMNR18} study a notion of individual fairness in a more traditional contextual bandit setting, in which $k$ individuals arrive at each round, and some subset of them are ``selected'', which yields observable reward to the learner. Their notion of individual fairness mandates that selection probability should be monotone in the (true) label of an individual (and in particular individuals with the same true label should be selected with the same probability). True labels cannot in general be ascertained, and as a result they only give positive results under strong realizability assumptions.

The papers most related to ours are \citet{GillenJKR18} and \citet{BechavodJW20}. \citet{GillenJKR18} introduces the idea of online learning with human auditor feedback as an approach to individual fairness, but give algorithms that are limited to a single auditor that makes decisions with respect to a restrictive parametric form of fairness metrics in the full information setting. \citet{BechavodJW20} generalize this to a much more permisive definition of a human auditor, but still operate in the full information setting and are limited to single human auditors.
\section{Preliminaries}\label{sec:prelim}

We start by specifying the notation we will use for our setting. We denote a feature space by $\cX$ and a label space by $\cY$. Throughout this work, we focus on the case where $\cY = \{0,1\}$. We denote by $\cH$ a hypothesis class of binary predictors $h:\cX\rightarrow\cY$, and assume that $\cH$ contains a constant hypothesis. For the purpose of achieving better accuracy-fairness trade-offs, we allow the deployment of randomized policies over the base class $\cH$, which we denote by $\Delta \cH$. As we will see later, in the context of individual fairness, it will be crucial to be able to compete with the best predictor in $\Delta\cH$, rather than simply in $\cH$. We model auditors as  observing $k$-tuples of examples (the people who are present at some round of the decision making process), as well as our randomized prediction rule, and will form objections by identifying a pair of examples for which they believe our treatment was ``unfair'' if any such pair exists. For an integer $k\geq 2$, we denote by $\cJ:\Delta\cH\times\cX^k\rightarrow\cX^2$ the domain of possible auditors. Next, we formalize the notion of fairness we will aim to satisfy.

\subsection{Individual Fairness and Auditing by Panels}

Here we define the notion of individual fairness and auditing that we use, following  \citet{DworkHPRZ12,GillenJKR18,BechavodJW20}, and extending it to the notion of a panel of auditors.

\begin{definition}[$\alpha$-fairness violation]
Let $\alpha\geq0$ and let $d:\cX\times\cX\rightarrow[0,1]$.\footnote{
 $d$ represents a function specifying the auditor's judgement of the ``similarity'' between individuals in a specific context. We do not require that $d$ be a metric: only that it be non-negative and symmetric. It is important that we make as few assumptions as possible when modeling human auditors, as in general, we cannot expect this form of feedback to take specific parametric form, or even be a metric.} We say that a policy $\pi\in\Delta\cH$ has an $\alpha$-fairness violation (or simply ``$\alpha$-violation'') on $(x,x')\in\cX^2$  with respect to $d$ if
\[
\pi(x)-\pi(x') > d(x,x') + \alpha.
\]
where $\pi(x) = \Pr_{h \sim \pi}[h(x) = 1]$.
\end{definition}

A fairness auditor, parameterized by a distance function $d$, given a policy $\pi$ and a set of $k$ individuals, will report any single pair of the $k$ individuals on which $\pi$ represents an $\alpha$-violation if one exists.

\begin{definition}[Auditor]
Let $\alpha\geq 0$. We define a fairness auditor $j^\alpha\in\cJ$ by, $\forall \pi\in\Delta\cH, \bar{x}\in\cX^k$,
\[
j^\alpha\left(\pi,\bar{x}\right) := \begin{cases}
(\bar{x}^{s},\bar{x}^l)\in V^{j} &\text{if } V^j := \{(\bar{x}^s,\bar{x}^l):s\neq l\in[k],\\
&\pi(\bar{x}^s)-\pi(\bar{x}^l) > d^j(x,x') + \alpha\}\neq\emptyset\\
(v,v) &\text{otherwise}
\end{cases},
\]

where $\bar{x} = (\bar{x}^1,\dots,\bar{x}^k)$, $d^j:\cX\times\cX\rightarrow[0,1]$ is auditor $j^\alpha$'s (implicit) distance function, and $v\in\cX$ is some ``default'' context. When clear from context, we will abuse notation, and simply use $j$ to denote such an auditor.
\end{definition}

Note that if there exist multiple pairs in $\bar{x}$ on which an $\alpha$-violation exist, we only require the auditor to report one. In the case in which the auditor does not consider there to be any fairness violations, we define its output to be a ``default'' value, $(v,v)\in\cX^2$, to indicate that no violation was detected.

Thus far our formulation of fairness violations and auditors follows the formulation in \citet{BechavodJW20}. In the following, we  generalize the notion of fairness violations to panels of multiple fairness auditors which extends beyond the framework of \citet{BechavodJW20}.

\begin{definition}[$(\alpha,\gamma)$-fairness violation] \label{def:panel-violation}
Let $\alpha\geq 0$, $0\leq\gamma\leq 1$, $m\in \mathbb{N}\setminus\{0\}$. We say that a policy $\pi\in\Delta\cH$ has an $(\alpha,\gamma)$-fairness violation (or simply ``$(\alpha,\gamma)$-violation'') on $(x,x')\in\cX^2$ with respect to $d^1,\dots,d^m:\cX^2\rightarrow[0,1]$ if
\[
\frac{1}{m}\sum_{i=1}^m \mathbbm{1}\left[\pi(x)-\pi(x') - d^i(x,x') > \alpha\right]\geq\gamma.
\]
\end{definition}

Definition \ref{def:panel-violation} specifies that a policy $\pi$ has an $(\alpha,\gamma)$-fairness violation on a pair of examples when a $\gamma$ fraction of the auditors consider $\pi$ to have an $\alpha$-fairness violation on that pair. By varying $\gamma$, we can interpolate between considering there to be a violation when any \emph{single} auditor determines that there is one at one extreme, to requiring unanimity amongst the auditors at the other extreme.

\begin{definition}[Panel]\label{def:panel}
Let $\alpha\geq 0$, $0\leq\gamma\leq 1$, $m\in \mathbb{N}\setminus\{0\}$. We define a fairness panel $\bar{j}^{\alpha,\gamma}$ by, $\forall \pi\in\Delta\cH, \bar{x}\in\cX^k$,
\[
\bar{j}^{\alpha,\gamma}_{j^1,\dots,j^m}(\pi,\bar{x}) = \begin{cases}
(\bar{x}^{s},\bar{x}^l)\in V^{\bar{j}} &\text{if } V^{\bar{j}} := \{(\bar{x}^{s},\bar{x}^l):s\neq l\in[k]\land \exists i_1,\dots,i_{\lceil\gamma m\rceil}\in[m],\\
&\forall s\in[\lceil\gamma m\rceil],
(\bar{x}^{s},\bar{x}^l)\in V^{j^{i_s}}\}\neq\emptyset\\
(v,v) &\text{otherwise}
\end{cases},
\]
where $\bar{x} := (\bar{x}^1,\dots,\bar{x}^k)$, $d^j:\cX\times\cX\rightarrow[0,1]$ is auditor $j$'s (implicit) distance function, and $v\in\cX$ is some ``default'' context. When clear from context, we will abuse notation and simply denote such a panel by $\bar{j}$.
\end{definition}

Again panels need only report a \emph{single} $(\alpha,\gamma)$-violation even if many exist. The rationale behind extending the auditing scheme to panels is that human auditors have their own implicit biases, and so there may be no single human auditor that a collection of stakeholders would agree to entrust with fairness judgements. It is much easier to agree on a representative panel of authorities. As already noted, the $\gamma$ parameter allows us to adjust the degree to which we require consensus amongst panel members: we can interpolate all the way between requiring full unanimity on all judgements of unfairness (when $\gamma = 1$) to giving any single panel member effective ``veto power'' (when $\gamma \leq 1/m$).

\begin{remark}[On Exploring the Accuracy-Fairness Tradeoff Frontier]
Note that different values of $\gamma$ for the panel do not change the auditing task for individual auditors: in all cases, each auditor is only asked to report $\alpha$-violations according to their own judgement. Thus, using the same feedback from a panel of auditors, we can algorithmically vary $\gamma$ to explore an entire frontier of fairness/accuracy tradeoffs. 
\end{remark}

\subsection{Lagrangian Loss  Formulation} \label{subsec:lagrangian}
Next, we define the three types of loss we will use in our setting.

\begin{definition}[Misclassification loss] We define the misclassification loss as, for all $\pi\in\Delta\cH$,  $\bar{x} \in \cX^k$, $\bar{y}\in\{0,1\}^k$ as:
\[
Error(\pi,\bar{x},\bar{y}) := \E\limits_{h\sim\pi} [\ell^{0-1}(h,\bar{x},\bar{y})].
\]
Where for all $h\in\cH$, $\ell^{0-1}(h,\bar{x},\bar{y}) := \sum_{i=1}^k \ell^{0-1}(h,(\bar{x}^i,\bar{y}^i))$, and $\forall i\in[k]:\ell^{0-1}(h,(\bar{x}^i,\bar{y}^i)) = \mathbbm{1}[h(\bar{x}^i)\neq \bar{y}^i]$.
\end{definition}

Next, we define the unfairness loss, to reflect the existence of one or more fairness violations according to a panel's judgement.

\begin{definition}[Unfairness loss] Let $\alpha\geq 0$, $0\leq\gamma\leq 1$.
We define the unfairness loss as, for all $\pi\in\Delta\cH$, $\bar{x} \in \cX^k$, $\bar{j}=\bar{j}^{\alpha,\gamma}_{j^1,\dots,j^m}:{\cX^k}\rightarrow\cX^2$,
\[
Unfair^{\alpha,\gamma}(\pi,\bar{x},\bar{j}) := \begin{cases}
1 &\bar{j}(\pi,\bar{x}) = (\bar{x}^s,\bar{x}^l) \land s\neq l
\\
0 &\text{otherwise}
\end{cases},
\]
where $\bar{x} := (\bar{x}^1,\dots,\bar{x}^k)$.
\end{definition}

Finally, we define the Lagrangian loss.

\begin{definition}[Lagrangian loss]\label{def:Lagrangian}
Let $C>0$, $\rho = (\rho^1,\rho^2)\in\cX^2$. We define the $(C,\rho)$-Lagrangian loss as, for all $\pi\in\Delta\cH$,  $\bar{x} \in \cX^k$, $\bar{y}\in\{0,1\}^k$,
\[
L_{C,\rho}(\pi,\bar{x},\bar{y}) := Error(\pi,\bar{x},\bar{y}) + C\cdot\left[\pi(\rho^1)-\pi(\rho^2)\right].
\]
\end{definition}

We are now ready to formally define our learning environment, which we do next.

\subsection{Individually Fair Online Learning with One-Sided Feedback}

In this section, we formally define our learning environment with one-sided feedback. The interaction proceeds in a sequential fashion, where in each round, the learner first deploys a policy $\pi^t\in\Delta\cH$, then the environment selects $k$ individuals $\bar{x}^t \in\cX^k$, and their labels $\bar{y}^t\in\cY^k$, possibly in an adversarial fashion. The learner is only shown $\bar{x}^t$. The environment then selects a panel of auditors $(j^{t,1},\dots,j^{t,m})\in\cJ^m$, possibly in adversarial fashion. The learner predicts $\bar{x}^t$ according to  $\pi^t$. The panel then reports whether a fairness violation was found, according to the predictions made by $\pi^t$. Finally, the learner  observes the true label only for positively-predicted individuals in $\bar{x}^t$, and suffers two types of loss: a misclassification loss (note that this loss may \emph{not} be observable to the learner due to the one-sided feedback) and an unfairness loss. Our setting is summarized in Algorithm \ref{alg:apple-tasting}.

\paragraph{One-sided feedback}
Our one-sided feedback structure (classically known as ``apple tasting'') is different from the standard bandit setting. In the bandit setting, the feedback visible to the learner is the loss for the selected action in each round. In our setting,  feedback may or may not be observable for a selected action: if we classify an individual as positive, we observe feedback for our action---and for the counterfactual action we could have taken (classifying them as negative). On the other hand, if we classify an individual as negative, we do not observe (but still suffer) our classification error.

\begin{algorithm}[ht]
\caption{Individually Fair Online Learning with One-Sided Feedback}\label{alg:apple-tasting}
\SetAlgoLined
\textbf{Input:} Number of rounds $T$, hypothesis class $\cH$\;
Learner initializes $\pi^1 \in \Delta \cH$\;
\For{$t = 1, \ldots, T$}{
Environment selects individuals $\bar{x}^t \in\cX^k$, and labels $\bar{y}^t\in\cY^k$, learner only observes $\bar{x}^t$\;
Environment selects panel of auditors $(j^{t,1},\dots,j^{t,m})\in\cJ^m$ \;
Learner draws $h^t\sim\pi^t$, predicts $\hat{y}^{t,i} = h^t(\bar{x}^{t,i})$ for each $i \in [k]$, observes $\bar{y}^{t,i}$ iff $\hat{y}^{t,i} = 1$\;
Panel reports its feedback $\rho^t = \bar{j}^{t,\alpha,\gamma}_{j^1,\dots,j^m}(\pi^t,\bar{x}^t)$ \;
Learner suffers misclassification loss $Error(h^t,\bar{x}^t, \bar{y}^t)$ (not necessarily observed by learner)\;
Learner suffers unfairness loss $Unfair(\pi^t,\bar{x}^t,\bar{j}^t)$\;
Learner updates $\pi^{t+1} \in \Delta \cH$\;
}
\end{algorithm}

To measure performance, we will ask for algorithms who are competitive with the best possible (fixed) policy in hindsight. This is captured using the notion of regret, which we define next for relevant loss functions.

\begin{definition}[Error regret]\label{def:Error-Regret}

We define the error regret of an algorithm $\cA$ against a comparator class $U\subseteq\Delta\cH$ to be
\[
Regret^{err}(\cA,T,U) = \sum_{t=1}^T Error(\pi^t,\bar{x}^t,\bar{y}^t) - \min\limits_{\pi^*\in U} \sum_{t=1}^T Error(\pi^*,\bar{x}^t,\bar{y}^t).
\]
\end{definition}

\begin{definition}[Unfairness regret]\label{def:Unfairness-Regret}
Let $\alpha\geq 0$, $0\leq\gamma\leq 1$. We define the unfairness regret of an algorithm $\cA$ against a comparator class $U\subseteq\Delta\cH$ to be
\[
Regret^{unfair,\alpha,\gamma}(\cA,T,U) = \sum_{t=1}^T Unfair^{\alpha,\gamma}(\pi^t,\bar{x}^t,\bar{j}^t)
- \min\limits_{\pi^*\in U} \sum_{t=1}^T Unfair^{\alpha,\gamma}(\pi^*,\bar{x}^t,\bar{j}^t).
\]
\end{definition}

Finally, we define the Lagrangian regret, which will be useful in our analysis.

\begin{definition}[Lagrangian regret]\label{def:Lagrangian-Regret}
Let $C>0$, and $(\rho^t)_{t=1}^T$ be a sequence s.t. $\forall t\in[T]:\rho^t\in\cX^2$. We define the Lagrangian regret of an algorithm $\cA$ against a comparator class $U\subseteq\Delta\cH$ to be
\[
Regret^{L,C,\rho^1,\dots,\rho^T}(\cA,T,U) =  \sum_{t=1}^T L_{C,\rho^t}(\pi^t,\bar{x}^t,\bar{y}^t) - \min_{\pi^*\in U} \sum_{t=1}^T L_{C,\rho^t}(\pi^*,\bar{x}^t,\bar{y}^t).
\]
\end{definition}

In order to construct an algorithm that achieves no regret simultaneously for accuracy and fairness, our approach will be to reduce the setting of individually fair learning with one-sided feedback (Algorithm \ref{alg:apple-tasting}) to the setting of contextual combinatorial semi-bandit, which we will see next.
\section{Reduction to Contextual Combinatorial Semi-Bandit}\label{sec:reduction}

In this section, we present our main result: a reduction from individually fair online learning with one-sided feedback (Algorithm \ref{alg:apple-tasting}) to the setting of (adversarial) contextual combinatorial semi-bandit.

\subsection{Contextual Combinatorial Semi-Bandit}

We begin by formally describing the contextual combinatorial semi-bandit setting.\footnote{The combinatorial (full) bandit problem formulation is due to \citet{Cesa-BianchiL09}. We consider a contextual variant of the problem. Our setting operates within a relaxation of the feedback structure, known as ``semi-bandit'' \citep{GyorgyLLO07}.} The setting can be viewed as an extension of the classical $k$-armed contextual bandit problem, to a case where $k$ instances arrive every round, each to be labelled as either $0$ or $1$. However, at each round, the action set over these labellings is restricted only to a subset  $A^t\subseteq\{0,1\}^k$, where each action corresponds the vector containing the predictions of a hypothesis $h\in\cH$ on the arriving contexts. Finally, the learner suffers loss that is a linear function of all specific losses for each of the $k$ instances, and is restricted to only observe the coordinates of the loss vector on instances predicted as $1$. The setting is summarized in Algorithm  \ref{alg:ccsb}.

\begin{algorithm}[ht]
\caption{Contextual Combinatorial Semi-Bandit}\label{alg:ccsb}
\SetAlgoLined
\textbf{Parameters:} Class of predictors $\cH$, number of rounds $T$\;
Learner deploys $\pi^1\in\Delta\cH$\;
\For{$t = 1, \ldots, T$}{
Environment selects loss vector $\ell^t \in [0,1]^k$ (without revealing it to learner)\;
Environment selects contexts $\bar{x}^t\in\cX^k$, and reveals them to the learner\;
Learner draws action $a^t \in A^t \subseteq \{0,1\}^k$ according to $\pi^t$ (where $A^t=\{a^{t}_h = (h(\bar{x}^{t,1}),\dots,h(\bar{x}^{t,k})):\forall h\in\cH\}$) \;
Learner suffers linear loss $\left\langle a^t,\ell^t\right\rangle$\;
Learner observes $\ell^{t,i}$ iff $a^{t,1} = 1$\;
Learner deploys $\pi^{t+1}$\;
}
\end{algorithm}

We next define regret in the context of contextual combinatorial semi-bandit (Algorithm \ref{alg:ccsb}).

\begin{definition}[Regret] \label{def:regret}
In the setting of Algorithm \ref{alg:ccsb}, we define the regret of an algorithm $\cA$ against a comparator class $U\subseteq\Delta\cH$
to be
\[
Regret(\cA,T,U) = \sum_{t=1}^T \E\limits_{a^t\sim\pi^t}\left\langle a^t,\ell^t \right\rangle - \min\limits_{\pi^*\in U} \sum_{t=1}^T \E\limits_{a^*\sim\pi^*}\left\langle a^t,\ell^t \right\rangle.
\]
\end{definition}

\subsection{Reduction}
Our reduction consists of two major components: encoding the fairness constraints, and translating one-sided feedback to semi-bandit feedback. We start by extending the sample set at round $t$, to encode $C$ copies of each of the individuals in the pair reported by the panel, where we append label of $0$ to the first individual, and label of $1$ to the second, in order to ``translate'' unfairness into error. We next translate one-sided feedback to semi-bandit feedback, by constructing the first half of the loss vector $\ell^t$, to return a loss of $0$ or $1$ for positively-predicted instances (according to the, observable true label), and the second half to return a loss of $1/2$ for negatively-predicted instances (regardless of the true label, which is unobservable, since the prediction is negative). We note that this transformation of the standard $0-1$ loss is regret-preserving, and the resulting losses are also linear and non-negative (which will be important in our setting, as we will see in Section \ref{sec:algorithms}). Finally, the first half of the action vector is constructed by simply invoking the selected hypothesis by the learner, $h^t$, on each of the contexts in the augmented $\bar{\bar{x}}^t$, while the second half reflects the opposites of the predictions made in the first half. The reduction is summarized in Algorithm \ref{alg:reduction}.

In describing the reduction, we use the following notations (For integers $k\geq 2$, $C\geq 1$):
\begin{align*}
&(i)~\forall a\in \{\rho^{t,1},\rho^{t,2},0,1,1/2\}:\quad
\bar{a} := \overbrace{(a,\dots,a)}^{\text{C times}},\quad \bar{\bar{a}} :=\overbrace{(a,\dots,a)}^{\text{k+2C times}}.\\
&(ii)~h(\bar{\bar{x}}^t) := (h(\bar{\bar{x}}^{t,1}),\dots,h(\bar{\bar{x}}^{t,2k+4C})).
\end{align*}

\begin{algorithm}[H]
\caption{Reduction to Contextual Combinatorial Semi-Bandit}\label{alg:reduction}
\SetAlgoLined
\textbf{Input:} Contexts $\bar{x}^t\in\cX^k$, labels $\bar{y}^t\in\{0,1\}^k$, hypothesis $h^t$, pair $\rho^t\in\cX^2$, parameter $C\in\mathbb{N}$\;
\begin{tabular}{l l}
\renewcommand{\arraystretch}{2}
\\[-0.2cm]
$\text{Define}$: & $\bar{\bar{x}}^t= (\bar{x}^t,\bar{{\rho}}^{t,1},\bar{{\rho}}^{t,2})\in\cX^{k+2C}, \quad \bar{\bar{y}}^t = (\bar{y}^t,\bar{0},\bar{1})\in\{0,1\}^{k+2C}$;\\[0.2cm]
$\text{Construct loss vector}$: & $\ell^t= (\bar{\bar{1}} -\bar{\bar{y}}^t,\bar{\bar{1/2}})\in[0,1]^{2k+4C}$;\\[0.2cm]
$\text{Construct action vector}$: & $a^t = (h^t(\bar{\bar{x}}^t),\bar{\bar{1}}-h^t(\bar{\bar{x}}^t))\in\{0,1\}^{2k+4C};$\\[0.2cm]
\end{tabular}\\
\textbf{Output:} $(\ell^t, a^t)$\;
\end{algorithm}

We next prove that the reduction described in Algorithm \ref{alg:reduction} can be used to upper bound an algorithm's  Lagrangian regret in the individually fair online learning with one-sided feedback setting, within a multiplicative factor of 2 times the dimension of the output of the reduction.

For the following theorem, we will assume the existence of an algorithm $\cA$ for the contextual combinatorial semi-bandit setting (as summarized in Algorithm \ref{alg:ccsb}) whose expected regret (compared to only fixed hypotheses in  $\cH$), against any adaptively and adversarially chosen sequence of loss functions $\ell^t$ and contexts $\bar{x}^t$, is bounded by $Regret(\cA,T,\cH)\leq R^{\cA,T,\cH}$.

\begin{theorem}\label{thm:reduction} In the setting of individually fair online learning with one-sided feedback (Algorithm \ref{alg:apple-tasting}), running $\cA$ while using the sequence $(a^t,\ell^t)_{t=1}^T$ generated by the reduction in Algorithm \ref{alg:reduction} (when invoked every round on $\bar{x}^t$, $\bar{y}^t$, $h^t$, $\rho^t$, and $C$), yields the following guarantee, for any $V \subseteq \Delta\cH$,
\[
\sum_{t=1}^T L_{C,\rho^t}(\pi^t,\bar{x}^t,\bar{y}^t) - \min_{\pi^*\in V} \sum_{t=1}^T L_{C,\rho^t}(\pi^*,\bar{x}^t,\bar{y}^t)\leq 2(2k+4C)R^{\cA,T,\cH}.
\]
\end{theorem}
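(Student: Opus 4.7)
The strategy is to set up an affine dictionary between the semi-bandit loss that the reduction feeds to $\cA$ and the Lagrangian loss we actually care about. Specifically, the plan is to show that for every $\pi^t\in\Delta\cH$,
\[
\E_{h^t \sim \pi^t}\bigl[\langle a^t, \ell^t \rangle\bigr] \;=\; \tfrac{1}{2}\,L_{C,\rho^t}(\pi^t, \bar{x}^t, \bar{y}^t) + c_t,
\]
where $c_t$ depends only on $(\bar{y}^t, C, k)$ and not on $\pi^t$. Once this identity is in hand, a no-regret guarantee for $\cA$ on the produced $(a^t,\ell^t)$-stream transfers directly (up to a dimension-dependent rescaling of the losses) to a Lagrangian no-regret guarantee.

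The key computation is a coordinate-by-coordinate expansion of $\langle a^t, \ell^t \rangle$ over the $2k+4C$ coordinates of the action and loss vectors. Using the elementary identity
\[
h(x)(1-y) + (1-h(x))\cdot\tfrac{1}{2} \;=\; \tfrac{1}{2}\,\mathbbm{1}[h(x)\neq y] + \tfrac{1}{2}\,\mathbbm{1}[y=0],
\]
I would split the sum into contributions from (i) the $k$ original examples $(\bar{x}^{t,i},\bar{y}^{t,i})$, (ii) the $C$ copies of $(\rho^{t,1},0)$, and (iii) the $C$ copies of $(\rho^{t,2},1)$. Averaging over $h^t\sim\pi^t$, (i) contributes $\tfrac{1}{2}\,Error(\pi^t,\bar{x}^t,\bar{y}^t)$ plus a $\pi^t$-independent constant, while (ii) and (iii) combine to $\tfrac{C}{2}\bigl[\pi^t(\rho^{t,1})-\pi^t(\rho^{t,2})\bigr]$ plus another constant, giving the claimed identity.

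The second ingredient is linearity. Because $L_{C,\rho^t}(\pi,\cdot,\cdot)$ is linear in $\pi$, the minimum over any $V\subseteq\Delta\cH$ is bounded below by the minimum over $\Delta\cH$, which is attained at a point mass and therefore equals $\min_{h\in\cH}\sum_t L_{C,\rho^t}(\{h\},\bar{x}^t,\bar{y}^t)$. Subtracting the affine identity evaluated at $\pi^t$ from the same identity evaluated at the best-in-hindsight $h^\star\in\cH$ makes the constants $c_t$ cancel, and the Lagrangian regret against $V$ is upper bounded by twice the semi-bandit regret of $\cA$ on the produced $(a^t,\ell^t)$-stream, measured against the class of deterministic hypotheses in $\cH$.

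Finally, because each coordinate of $\ell^t$ lies in $[0,1]$, the per-round inner product $\langle a^t,\ell^t\rangle$ takes values in $[0,\,2k+4C]$; rescaling to the unit-range convention under which the semi-bandit regret bound $R^{\cA,T,\cH}$ is stated introduces the $(2k+4C)$ factor, yielding the claimed $2(2k+4C)\,R^{\cA,T,\cH}$ bound. A sanity check I would include is that the semi-bandit observation model is honored by one-sided feedback: a ``selected'' first-half coordinate (where $a^{t,i}=1$) corresponds to a positively predicted instance, whose label is either observable by one-sided feedback (for the original $\bar{x}^t$) or synthetically specified (for the $\rho^{t,1},\rho^{t,2}$ copies), while every second-half coordinate carries the known constant loss $\tfrac{1}{2}$. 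I expect the main obstacle to be the bookkeeping in the first step, in particular making sure the $C$-copies of $\rho^{t,1}$ and $\rho^{t,2}$ translate into the fairness term $C[\pi(\rho^{t,1})-\pi(\rho^{t,2})]$ with exactly the right sign and coefficient after combining with the $\tfrac{1}{2}$ from the negative-prediction branch.
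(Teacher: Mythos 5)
Your proposal is correct and follows essentially the same route as the paper: your single affine identity $\E_{h\sim\pi}[\langle a^t,\ell^t\rangle]=\tfrac{1}{2}L_{C,\rho^t}(\pi,\bar{x}^t,\bar{y}^t)+c_t$ is exactly the composition of the paper's Lemma \ref{lem:reduction1} (Lagrangian differences equal Error differences on the augmented examples) and Lemma \ref{lem:reduction2} (Error differences equal twice the inner-product differences), just stated per-policy with the constants made explicit rather than cancelled inside a difference. The passage from $\min_V$ to $\min_{\Delta\cH}$ to $\min_{\cH}$ by linearity and the $(2k+4C)$ rescaling of the loss range are likewise identical to the paper's argument.
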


In order to prove Theorem \ref{thm:reduction}, we first state and prove two lemmas, which express the Lagrangian regret in the setting of Individually Fair Online Learning with One-Sided Feedback (Algorithm \ref{alg:apple-tasting}) in terms of the regret in the contextual combinatorial semi-bandit setting (Algorithm \ref{alg:ccsb}).
In what follows, we denote $k' = k+2C$.

\begin{lemma} \label{lem:reduction1}
For all $\pi,\pi'\in\Delta\cH$, $\bar{x}^t\in\cX^k$, $\bar{y}^t\in\{0,1\}^k$, 
\[
L_{C,\rho^t}(\pi,\bar{x}^t,\bar{y}^t) - L_{C,\rho^t}(\pi',\bar{x}^t,\bar{y}^t) = 
\sum_{i=1}^{k'} Error(\pi,\bar{\bar{x}}^{t,i},\bar{\bar{y}}^{t,i}) - \sum_{i=1}^{k'} Error(\pi',\bar{\bar{x}}^{t,i},\bar{\bar{y}}^{t,i}).
\]
\end{lemma}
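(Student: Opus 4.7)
The plan is to unpack both sides directly from the definitions and match them term by term. The right-hand side is a sum of $k' = k + 2C$ single-instance errors over $\bar{\bar{x}}^t = (\bar{x}^t, \bar{\rho}^{t,1}, \bar{\rho}^{t,2})$ with labels $\bar{\bar{y}}^t = (\bar{y}^t, \bar{0}, \bar{1})$. I would first split this sum into (a) the first $k$ terms, which by definition of $Error$ recover exactly $Error(\pi, \bar{x}^t, \bar{y}^t)$ (and similarly for $\pi'$), and (b) the remaining $2C$ terms corresponding to the appended copies of $\rho^{t,1}$ and $\rho^{t,2}$.

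For part (b), I would compute each appended term explicitly. For a copy of $\rho^{t,1}$ paired with label $0$, the per-instance $0$--$1$ error reduces to $\E_{h \sim \pi}\mathbbm{1}[h(\rho^{t,1}) \neq 0] = \pi(\rho^{t,1})$. For a copy of $\rho^{t,2}$ paired with label $1$, it reduces to $\E_{h \sim \pi}\mathbbm{1}[h(\rho^{t,2}) \neq 1] = 1 - \pi(\rho^{t,2})$. Summing over the $C$ copies of each gives a contribution of $C\cdot\pi(\rho^{t,1}) + C(1 - \pi(\rho^{t,2})) = C\bigl[\pi(\rho^{t,1}) - \pi(\rho^{t,2})\bigr] + C$.

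Plugging both pieces back in, the right-hand side becomes
\[
\bigl(Error(\pi,\bar{x}^t,\bar{y}^t) + C[\pi(\rho^{t,1}) - \pi(\rho^{t,2})] + C\bigr) - \bigl(Error(\pi',\bar{x}^t,\bar{y}^t) + C[\pi'(\rho^{t,1}) - \pi'(\rho^{t,2})] + C\bigr),
\]
and the additive $C$ constants cancel. Using Definition \ref{def:Lagrangian}, what remains is precisely $L_{C,\rho^t}(\pi, \bar{x}^t, \bar{y}^t) - L_{C,\rho^t}(\pi', \bar{x}^t, \bar{y}^t)$, which is the left-hand side.

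There is no real obstacle here; the statement is a bookkeeping identity whose only content is that the two appended blocks of constant-labeled copies of $\rho^{t,1}$ and $\rho^{t,2}$ translate the Lagrangian's fairness penalty $C[\pi(\rho^{t,1}) - \pi(\rho^{t,2})]$ into $0$--$1$ misclassification error, with the labeling choice ($0$ for $\rho^{t,1}$, $1$ for $\rho^{t,2}$) ensuring the correct sign. The only thing to be careful about is that the difference (not the absolute value) of the Lagrangians is what is claimed, so the constant $C$ offset introduced by the $(1 - \pi(\rho^{t,2}))$ term is harmless.
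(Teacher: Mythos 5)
Your proposal is correct and follows essentially the same route as the paper's proof: both unpack the definitions, identify the error on a label-$0$ copy of $\rho^{t,1}$ as $\pi(\rho^{t,1})$ and on a label-$1$ copy of $\rho^{t,2}$ as $1-\pi(\rho^{t,2})$, and observe that the additive constant cancels in the difference. The only cosmetic difference is that you expand the right-hand side while the paper expands the left-hand side.
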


\begin{proof}
Observe that
\begin{align*}
&L_{C,\rho^t}(\pi,\bar{x}^t,\bar{y}^t) - L_{C,\rho^t}(\pi',\bar{x}^t,\bar{y}^t)\\
&= Error(\pi,\bar{x}^t,\bar{y}^t) + C\cdot\left[\pi(\rho^{t,1})-\pi(\rho^{t,2})\right] - Error(\pi',\bar{x}^t,\bar{y}^t) - C\cdot\left[\pi'(\rho^{t,1})-\pi'(\rho^{t,2})\right]\\
&= \sum_{i=1}^{k} Error(\pi,\bar{\bar{x}}^{t,i},\bar{\bar{y}}^{t,i}) - Error(\pi',\bar{\bar{x}}^{t,i},\bar{\bar{y}}^{t,i}) + \sum_{i=k+1}^{k+C} \pi(\rho^{t,1})-\pi'(\rho^{t,1}) + \sum_{i=k+C+1}^{k+2C} 1-\pi(\rho^{t,2})-1+\pi'(\rho^{t,2})\\
&= \sum_{i=1}^{k} Error(\pi,\bar{\bar{x}}^{t,i},\bar{\bar{y}}^{t,i}) -  Error(\pi',\bar{\bar{x}}^{t,i},\bar{\bar{y}}^{t,i}) + \sum_{i=k+1}^{k+C} Error(\pi,\bar{\bar{x}}^{t,i},\bar{\bar{y}}^{t,i})-Error(\pi',\bar{\bar{x}}^{t,i},\bar{\bar{y}}^{t,i})\\
&+ \sum_{i=k+C+1}^{k+2C} Error(\pi,\bar{\bar{x}}^{t,i},\bar{\bar{y}}^{t,i})-Error(\pi',\bar{\bar{x}}^{t,i},\bar{\bar{y}}^{t,i})\\
&=\sum_{i=1}^{k'} Error(\pi,\bar{\bar{x}}^t,\bar{\bar{y}}^t) - \sum_{i=1}^{k'} Error(\pi',\bar{\bar{x}}^t,\bar{\bar{y}}^t).
\end{align*}
Which proves the lemma.
\end{proof}

\begin{lemma} \label{lem:reduction2}
For all $\pi,\pi'\in\Delta\cH$, $\bar{\bar{x}}^t\in\cX^{k'}$,$\bar{\bar{y}}^t\in\cY^{k'}$, 
\[
\sum_{i=1}^{k'} Error(\pi,\bar{\bar{x}}^t,\bar{\bar{y}}^t) - \sum_{i=1}^{k'} Error(\pi',\bar{\bar{x}}^t,\bar{\bar{y}}^t) = 2\left[\E\limits_{h\sim\pi}\left[\langle a^{h},\ell^t\rangle\right] - \E\limits_{h'\sim\pi'}\left[\langle a^{h'},\ell^t\rangle\right]\right].
\]
\end{lemma}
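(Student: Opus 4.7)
The plan is to establish a per-coordinate identity relating the inner product $\langle a^h, \ell^t\rangle$ of a single hypothesis's action with the loss vector to the $0$--$1$ classification loss of that hypothesis, and then exploit the fact that the ``offset'' in this identity does not depend on $h$ (so it cancels in differences).

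First I would fix a single hypothesis $h \in \cH$ and expand the inner product using the explicit form of $a^h = (h(\bar{\bar{x}}^t), \bar{\bar{1}} - h(\bar{\bar{x}}^t))$ and $\ell^t = (\bar{\bar{1}} - \bar{\bar{y}}^t, \bar{\bar{1/2}})$ from Algorithm~\ref{alg:reduction}. This yields
\[
\langle a^h, \ell^t\rangle = \sum_{i=1}^{k'} h(\bar{\bar{x}}^{t,i})\cdot(1 - \bar{\bar{y}}^{t,i}) + \sum_{i=1}^{k'} (1 - h(\bar{\bar{x}}^{t,i}))\cdot\tfrac{1}{2}.
\]
Then I would verify, by a case analysis on the four possible values of $(h(\bar{\bar{x}}^{t,i}), \bar{\bar{y}}^{t,i}) \in \{0,1\}^2$, the per-coordinate identity
\[
h(\bar{\bar{x}}^{t,i})(1 - \bar{\bar{y}}^{t,i}) + \tfrac{1}{2}(1 - h(\bar{\bar{x}}^{t,i})) = \tfrac{1}{2}\,\ell^{0\text{-}1}(h,(\bar{\bar{x}}^{t,i},\bar{\bar{y}}^{t,i})) + \tfrac{1}{2}(1 - \bar{\bar{y}}^{t,i}).
\]

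Summing this identity over $i$ gives $\langle a^h, \ell^t\rangle = \tfrac{1}{2}\sum_{i=1}^{k'}\ell^{0\text{-}1}(h,(\bar{\bar{x}}^{t,i},\bar{\bar{y}}^{t,i})) + c^t$, where $c^t := \tfrac{1}{2}\sum_{i=1}^{k'}(1 - \bar{\bar{y}}^{t,i})$ depends only on the (given) label vector, not on $h$. Taking expectations over $h \sim \pi$ and using the definition of $Error$, this becomes $\E_{h\sim\pi}[\langle a^h, \ell^t\rangle] = \tfrac{1}{2}\sum_{i=1}^{k'}Error(\pi,\bar{\bar{x}}^{t,i},\bar{\bar{y}}^{t,i}) + c^t$, and the analogous equality holds for $\pi'$ with the same $c^t$.

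Subtracting the two expressions makes $c^t$ cancel, and multiplying by $2$ gives exactly the claimed identity. The argument is essentially bookkeeping; the only slightly subtle point is to notice that the ``halved'' losses on the negatively-predicted coordinates, together with the $(1 - \bar{\bar{y}}^{t,i})$ factor on the positively-predicted coordinates, conspire to reproduce $\tfrac{1}{2}\mathbbm{1}[h(\bar{\bar{x}}^{t,i}) \neq \bar{\bar{y}}^{t,i}]$ up to an $h$-independent shift — this is what ensures that the reduction preserves regret up to the factor of $2$.
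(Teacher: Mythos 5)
Your proposal is correct and is essentially the same argument as the paper's: your $h$-independent offset $c^t = \tfrac{1}{2}\sum_i(1-\bar{\bar{y}}^{t,i})$ is exactly the paper's cancelling term $\sum_i\mathbbm{1}[\bar{\bar{y}}^{t,i}=0]$ (up to the factor of $2$), and both proofs reduce to the same per-coordinate identity followed by linearity of expectation over $h\sim\pi$. The only cosmetic difference is that you work hypothesis-by-hypothesis and then average, while the paper writes the inner products directly in terms of $\pi(\bar{\bar{x}}^{t,i})$.
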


\begin{proof}
Observe that
\begin{align*}
&\sum_{i=1}^{k'} Error(\pi,\bar{\bar{x}}^t,\bar{\bar{y}}^t) - \sum_{i=1}^{k'} Error(\pi',\bar{\bar{x}}^t,\bar{\bar{y}}^t)\\ 
&= \left[\sum_{i=1}^{k'} Error(\pi,\bar{\bar{x}}^{t,i},\bar{\bar{y}}^{t,i}) +\mathbbm{1}\left[\bar{\bar{y}}^{t,i} = 0\right]\right] - \left[\sum_{i=1}^{k'} Error(\pi',\bar{\bar{x}}^{t,i},\bar{\bar{y}}^{t,i}) + \mathbbm{1}\left[\bar{\bar{y}}^{t,i} = 0\right]\right]\\
&=2\bigg[\bigg\langle\left(\pi(\bar{\bar{x}}^{t,1}),\dots,\pi(\bar{\bar{x}}^{t,k'}),1-\pi(\bar{\bar{x}}^{t,1}),\dots,1-\pi(\bar{\bar{x}}^{t,k'})\right),\left(1-\bar{\bar{y}}^{t,1},\dots,1-\bar{\bar{y}}^{t,k'},1/2,\dots,1/2\right)\bigg\rangle\\ 
&-\bigg\langle\left(\pi'(\bar{\bar{x}}^{t,1}),\dots,\pi'(\bar{\bar{x}}^{t,k'}),1-\pi'(\bar{\bar{x}}^{t,1}),\dots,1-\pi'(\bar{\bar{x}}^{t,k})\right),\left(1-\bar{\bar{y}}^{t,1},\dots,1-\bar{\bar{y}}^{t,k'},1/2,\dots,1/2\right)\bigg\rangle\bigg]\\
&= 2\left[\E\limits_{h\sim\pi}\left[\langle a^{h},\ell^t\rangle\right] - \E\limits_{h'\sim\pi'}\left[\langle a^{h'},\ell^t\rangle\right]\right].
\end{align*}

Where the last transition stems from the linearity of Error$(\cdot,\bar{\bar{x}}^t,\bar{\bar{y}}^t)$. This concludes the proof.
\end{proof}

We are now ready to prove Theorem \ref{thm:reduction}.
\begin{proof}[Proof of Theorem \ref{thm:reduction}]
We can see that
\begin{align*}
&\sum_{t=1}^T L_{C,\rho^t}(\pi^t,\bar{x}^t,\bar{y}^t) - \min_{\pi^*\in V}\sum_{t=1}^T L_{C,\rho^t}(\pi^*,\bar{x}^t,\bar{y}^t)&&\\
&\leq \sum_{t=1}^T L_{C,\rho^t}(\pi^t,\bar{x}^t,\bar{y}^t) - \min_{\pi^*\in\Delta\cH} \sum_{t=1}^T L_{C,\rho^t}(\pi^*,\bar{x}^t,\bar{y}^t)\quad\quad\quad\quad\quad&&\text{($V \subseteq \Delta\cH)$}\\
&= \sum_{t=1}^T L_{C,\rho^t}(\pi^t,\bar{x}^t,\bar{y}^t) - \min_{\pi^*\in\cH} \sum_{t=1}^T L_{C,\rho^t}(\pi^*,\bar{x}^t,\bar{y}^t)&&\text{(Linearity of $L_{C,\rho^t}(\cdot,\bar{x}^t,\bar{y}^t$)}\\
&=2\left[\sum_{t=1}^T\E\limits_{h^t\sim\pi^t}\left[\langle a^{h^t},\ell^t\rangle\right] -\min_{\pi^*\in\cH} \sum_{t=1}^T \E\limits_{h^*\sim\pi^*}\left[\langle a^{h^*},\ell^t\rangle\right]\right].&&\text{(Lemma \ref{lem:reduction1}+Lemma \ref{lem:reduction2})}\\
&=2(2k+4C)R^{\cA,T,\cH}&&\text{($\forall t\in[T]:\left\vert\left\langle\ell^t,a^t\right\rangle\right\vert\in[0,2k+4C]$)}.
\end{align*}
Which concludes the proof.
\end{proof}

Note that the guarantee of Theorem \ref{thm:reduction} holds when competing with the set of all possibly randomized policies $\Delta\cH$ over the base class, instead of only with respect to the best classifier in $\cH$.
\section{Multi-Criteria No Regret Guarantees} \label{sec:algorithms}

In this section, we will see how the guarantees established in Section \ref{sec:reduction} can be leveraged to provide multi-criteria no regret guarantees, simultaneously for accuracy and for fairness. We begin by establishing an equivalence between auditing by panels and auditing by instance-specific ``representative'' auditors, which will be useful in our analysis.

\subsection{From Panels to ``Representative'' Auditors}\label{subsec:equivalence}

Here, we give a reduction from auditing by panels to auditing by instance-specific single auditors. In particular, we prove that the feedback given by any panel can be viewed as equivalent to the decisions of single, ``representative'', auditors from the panel, where the identity of the relevant auditor is determined only as a function of the specific pair $(x, x')$ in question. 

We observe that when it comes to a single pair $(x, x')$, we can order auditors by their ``strictness'' on this pair, as measured by $d(x, x')$. However  it is not possible in general to  order or compare the level of ``strictness'' of different auditors beyond a single pair, as some may be stricter than others on some pairs, but have the opposite relation on others. For illustration, consider the following example: let
$\cX = \{x^1,x^2,x^3\}, \cJ=\{j^1,j^2\}$ and assume that $d^{j^1}(x^1,x^2) > d^{j^2}(x^1,x^2)$, and
$d^{j^1}(x^2,x^3) < d^{j^2}(x^2,x^3)$. In the context of this example, asking who is stricter or who is more lenient among the auditors, in an absolute sense, is undefined. 

However, as we restrict the attention to a single pair $(x,x')$, such a task becomes feasible. Namely, in spite of the fact that we do not have access to auditors' underlying distance measures (we only observe feedback regarding violations), we know that there is an implicit ordering among the auditors' level of strictness with respect to that specific pair. The idea is to then utilize this (implicit) ordering to argue that a panel's judgements with respect to this pair are in fact equivalent to the judgements of a specific single auditor from the panel, which can be viewed as a ``representative auditor''. We formalize the argument in Lemma \ref{lem:equivalence}.

\begin{lemma}\label{lem:equivalence}
Let $(x,x')\in\cX^2$, $(j^1,...,j^m)\in\cJ^m$. Then, there exists an index $s=s_{x,x'}(j^1,\dots,j^m)\in[m]$ such that the following are equivalent, for all $\pi\in\Delta\cH$:
\begin{enumerate}
    \item $\pi$ has an $(\alpha,\gamma)$-violation on $(x,x')$ with respect to panel $\bar{j}^{\alpha,\gamma}_{j^1,...,j^m}$.
    \item $\pi$ has an $\alpha$-violation on $(x,x')$ with respect to auditor $j^{s}$.
\end{enumerate}
\end{lemma}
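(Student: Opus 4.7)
The plan is to reduce the $(\alpha,\gamma)$-violation condition on the fixed pair $(x,x')$ to a statement about a single order statistic among the distances $d^1(x,x'),\dots,d^m(x,x')$, and then to take $s$ to be the index realizing that order statistic. The key observation is that, once we fix a pair $(x,x')$ and a policy $\pi$, the individual violation indicators $\mathbbm{1}[\pi(x)-\pi(x') - d^i(x,x') > \alpha]$ differ across $i$ only through the scalar $d^i(x,x')$, so the problem collapses to comparing each $d^i(x,x')$ to a single threshold $\Delta := \pi(x)-\pi(x') - \alpha$.

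Concretely, I would first abbreviate $\Delta := \pi(x)-\pi(x')-\alpha$ and note that auditor $j^i$ reports an $\alpha$-violation on $(x,x')$ if and only if $d^i(x,x') < \Delta$. Next, sort the auditors by strictness on this specific pair: let $\sigma$ be a permutation of $[m]$ with $d^{\sigma(1)}(x,x') \le d^{\sigma(2)}(x,x') \le \cdots \le d^{\sigma(m)}(x,x')$, and define
\[
s \;=\; s_{x,x'}(j^1,\dots,j^m) \;:=\; \sigma\bigl(\lceil \gamma m \rceil\bigr).
\]
Thus $s$ is the auditor whose distance function, evaluated at $(x,x')$, is the $\lceil \gamma m \rceil$-th smallest among the panel. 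Ties (which can occur since multiple auditors may share the same $d^i(x,x')$ value) can be broken arbitrarily; they do not affect the argument because the violation predicate depends on $d^i(x,x')$ only through its value.

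With this choice, both directions of the equivalence reduce to counting. For the forward direction, if $\pi$ has an $(\alpha,\gamma)$-violation on $(x,x')$, then at least $\lceil \gamma m \rceil$ of the indices $i\in[m]$ satisfy $d^i(x,x') < \Delta$, so by the sort the $\lceil \gamma m \rceil$-th smallest distance satisfies $d^s(x,x') = d^{\sigma(\lceil\gamma m\rceil)}(x,x') < \Delta$, i.e. $\pi(x)-\pi(x') > d^s(x,x') + \alpha$, which is exactly an $\alpha$-violation w.r.t.\ $j^s$. For the reverse direction, if $d^s(x,x') < \Delta$ then all the $\lceil \gamma m \rceil$ auditors in $\{\sigma(1),\dots,\sigma(\lceil\gamma m\rceil)\}$ have distance at most $d^s(x,x') < \Delta$ and therefore flag an $\alpha$-violation on $(x,x')$, so the fraction of flagging auditors is at least $\lceil\gamma m\rceil/m \ge \gamma$, giving an $(\alpha,\gamma)$-violation.

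I do not anticipate a real obstacle; the content of the lemma is essentially the observation that a $\gamma$-quantile-style thresholding by a panel is pointwise equivalent (on any single pair) to thresholding by the $\lceil\gamma m\rceil$-th order statistic. The only minor care needed is (i) noting that $\frac{1}{m}\sum_i \mathbbm{1}[\cdot]\ge \gamma$ is equivalent to having at least $\lceil\gamma m\rceil$ ones, since the sum is integer-valued, which matches the $\lceil\gamma m\rceil$ appearing in Definition \ref{def:panel} for the panel, and (ii) handling ties in the sorted distances, which is harmless as explained above. The index $s$ depends on both $(x,x')$ and $(j^1,\dots,j^m)$ (as the notation already indicates), which is unavoidable and is precisely what makes the ``representative auditor'' instance-dependent.
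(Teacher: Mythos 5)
Your proposal is correct and follows essentially the same route as the paper's proof: sort the auditors by their distances on the fixed pair $(x,x')$, take $s$ to be the index realizing the $\lceil\gamma m\rceil$-th smallest distance, and use the monotonicity of the violation predicate in $d^i(x,x')$ to establish both directions. The paper phrases the argument compactly as an identity of indicator functions, while you spell out the two directions and the integrality point explicitly, but the underlying idea is identical.
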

The crucial aspect of this lemma is that the index  $s_{x,x'}(j^1,\dots,j^m)$ of the ``pivotal'' auditor is defined independently of $\pi$.

\begin{proof}[Proof of Lemma \ref{lem:equivalence}]
Fix $(x,x')$. Then, we can define an ordering of $(j^1,...,j^m)$ according to their (underlying) distances on $(x,x')$,
\begin{equation}\label{eq:ordering}
d^{j^{i_1}}(x,x')\leq\dots\leq d^{j^{i_m}}(x,x').
\end{equation}

Then, set
\begin{equation}\label{eq:median}
s := s_{x,x'}(j^1,...,j^m) = i_{\left\lceil\gamma m\right\rceil}.
\end{equation}

Note that $s$ in eq. (\ref{eq:median}) is well-defined, since $\gamma \leq 1$.

We also note that, using the ordering defined in eq. (\ref{eq:ordering}), for any $r\in[m]$, 
\begin{equation}\label{eq:stricter}
\pi(x)-\pi(x') > d^{j^{i_r}}(x,x') + \alpha \implies \forall r'\leq r:\pi(x)-\pi(x') > d^{j^{i_{r'}}}(x,x') + \alpha.    
\end{equation}

Hence, when considering a random variable indicating an $(\alpha,\gamma)$-violation on $(x,x')$ with respect to panel $\bar{j}$, we know that
\begin{align*}
&\mathbbm{1}\left[\left[\frac{1}{m}\sum_{i=1}^m \mathbbm{1}\left[\pi(x)-\pi(x') - d^{j^i}(x,x') > \alpha\right]\right]\geq\gamma\right]&&\\ 
&= \mathbbm{1}\left[\left[\frac{1}{m}\sum_{l=1}^s\mathbbm{1}\left[\pi(x)-\pi(x') - d^{j^{i_l}}(x,x') > \alpha\right]\right]\geq\gamma\right]&&\text{(Eq. \ref{eq:median} and Eq. \ref{eq:stricter})}\\
&= \mathbbm{1}\left[\pi(x)-\pi(x') - d^{j^s}(x,x') > \alpha\right]&&\text{(Eq. \ref{eq:median})},
\end{align*}

which is equivalent to indicating an $\alpha$-violation on $(x,x')$ with respect to auditor $j^s$.
This concludes the proof.
\end{proof}

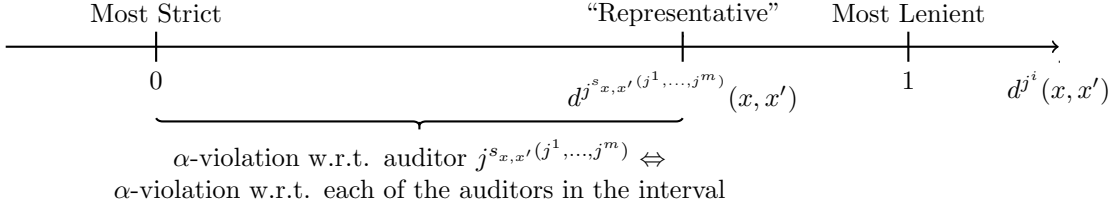
\begin{figure}[t]
\centering
\begin{tikzpicture}
\draw[thick,->] (0,0) -- (14,0); 
\draw[thick,-] (14,-0.2) -- (14,-0.2) node[anchor=north]{$d^{j^i}(x,x')$};
\draw[thick,-] (9,0.2) -- (9,-0.2) node[anchor=north]{$d^{j^{s_{x,x'}(j^1,\dots,j^m)}}(x,x')$};
\draw[thick,-] (2,0.2) node[anchor=south]{Most Strict} -- (2,-0.2) node[anchor=north]{$0$};
\draw[thick,-] (9,0.14) node[anchor=south]{``Representative''} -- (9,0.14);
\draw[thick,-] (12,0.2) node[anchor=south]{Most Lenient} -- (12,-0.2) node[anchor=north]{$1$};
\draw [thick,decoration={brace}, decorate] (9,-0.95) -- (2,-0.95);
\draw [thick,-] (5.5,-1.1) node[anchor=north, text width=10cm,align=center]{$\alpha$-violation w.r.t. auditor $j^{s_{x,x'}(j^1,\dots,j^m)}$ $\Leftrightarrow$ \\ $\alpha$-violation w.r.t. each of the auditors in the interval} -- (5.5,-1);
\end{tikzpicture}
\caption{An illustration of an ordering of a panel of auditors $(j^1,\dots,j^m)$ according to their (implicit) distances on $(x,x')$. ${j^{s_{x,x'}(j^1,\dots,j^m)}}$ denotes the auditor who is in the $\lceil\gamma m\rceil$ position in this ordering, which can also be viewed as having the ``swing vote'' with respect to deciding an $(\alpha,\gamma)$-violation in this instance.
} \label{}
\end{figure}

\subsection{Simultaneous Accuracy and Fairness Guarantees}\label{subsec:simultaneous}

Next, we will see how the guarantees established in Section \ref{sec:reduction}, along with the reduction to ``representative'' auditors of Section \ref{subsec:equivalence}, allow for providing simultaneous guarantees for each of accuracy and fairness. We begin by defining the comparator set as all policies in $\cH$ that are, for every round $t\in[T]$, $(\alpha,\gamma)$-fair on the arriving individuals of the round $\bar{x}^t$, with respect to the realized panel in that round $\bar{j}^t$. Note that this set is only defined in hindsight.

\begin{definition}[$(\alpha,\gamma)$-fair policies]\label{def:comparator-set}
Let $\alpha\geq 0$, $0\leq \gamma \leq 1$, $m\in\mathbb{N}\setminus\{0\}$. We denote the set of all $(\alpha,\gamma)$-fair policies with respect to all of the rounds in the run of the algorithm as
\begin{align*}
Q_{\alpha,\gamma} := \left\{\pi\in\Delta\cH : \forall t\in[T],~ \bar{j}^{t,\alpha,\gamma}_{j^{t,1},\dots,j^{t,m}}(\pi,\bar{x}^t)=(v,v)\right\}.
\end{align*}
\end{definition}

Next, we show how the Lagrangian regret guarantee established in Theorem \ref{thm:reduction} can be utilized to provide simultaneous guarantees for accuracy and fairness, when compared with the most accurate policy in $Q_{\alpha-\epsilon,\gamma}$. Note, in particular, that by setting $Q_{\alpha-\epsilon,\gamma}$ as the comparator set, we will be able to upper bound the number of rounds in which an $(\alpha,\gamma)$-violation has occurred.

\begin{lemma}\label{lem:joint-loss}
For any $\epsilon\in[0,\alpha]$,
\begin{align*}
& C\epsilon\sum_{t=1}^T Unfair^{\alpha,\gamma}(\pi^t,\bar{x}^t,\bar{j}^t) + Regret^{err}(\cA,T,Q_{\alpha-\epsilon,\gamma})\\
&\leq \sum_{t=1}^T L_{C,\rho^t}(\pi^t,\bar{x}^t,\bar{y}^t) - \min_{\pi^*\in Q_{\alpha-\epsilon,\gamma}}\sum_{t=1}^T L_{C,\rho^t}(\pi^*,\bar{x}^t,\bar{y}^t).
\end{align*}
\end{lemma}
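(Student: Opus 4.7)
The plan is to pin down the minimum on the right hand side by evaluating at a specific convenient comparator, split the resulting Lagrangian difference into an error piece and a fairness piece, and treat the two pieces separately. Concretely, I would let $\pi^\dagger \in \argmin_{\pi\in Q_{\alpha-\epsilon,\gamma}} \sum_{t=1}^T Error(\pi,\bar{x}^t,\bar{y}^t)$. Since $\pi^\dagger \in Q_{\alpha-\epsilon,\gamma}$, the minimum of the Lagrangian loss over $Q_{\alpha-\epsilon,\gamma}$ is at most its value at $\pi^\dagger$, so the right hand side is lower bounded by $\sum_t L_{C,\rho^t}(\pi^t,\bar{x}^t,\bar{y}^t) - \sum_t L_{C,\rho^t}(\pi^\dagger,\bar{x}^t,\bar{y}^t)$. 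Expanding each $L_{C,\rho^t}$ via Definition \ref{def:Lagrangian} and using the defining property of $\pi^\dagger$ to identify the error difference as $Regret^{err}(\cA,T,Q_{\alpha-\epsilon,\gamma})$, it then suffices to prove
\[
\sum_{t=1}^T \left\{\left[\pi^t(\rho^{t,1})-\pi^t(\rho^{t,2})\right] - \left[\pi^\dagger(\rho^{t,1})-\pi^\dagger(\rho^{t,2})\right]\right\} \;\geq\; \epsilon \sum_{t=1}^T Unfair^{\alpha,\gamma}(\pi^t,\bar{x}^t,\bar{j}^t).
\]

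I would prove this fairness-term inequality by a per-round case analysis. Rounds on which the panel returns $\rho^t = (v,v)$ contribute $0$ to both sides and can be dropped. On every remaining round $Unfair^{\alpha,\gamma}(\pi^t,\bar{x}^t,\bar{j}^t) = 1$, so it is enough to show the summand on the left is at least $\epsilon$. For this I would invoke Lemma \ref{lem:equivalence} on the pair $(\rho^{t,1},\rho^{t,2})$: since $\pi^t$ has an $(\alpha,\gamma)$-violation on this pair with respect to the realized panel $\bar{j}^t$, there is a representative auditor $j^{t,s}$ with $\pi^t(\rho^{t,1})-\pi^t(\rho^{t,2}) > d^{j^{t,s}}(\rho^{t,1},\rho^{t,2}) + \alpha$. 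A second application of Lemma \ref{lem:equivalence}, using $\pi^\dagger \in Q_{\alpha-\epsilon,\gamma}$ and hence the absence of any $(\alpha-\epsilon,\gamma)$-violation of $\pi^\dagger$ on $(\rho^{t,1},\rho^{t,2})$, yields $\pi^\dagger(\rho^{t,1})-\pi^\dagger(\rho^{t,2}) \leq d^{j^{t,s}}(\rho^{t,1},\rho^{t,2}) + (\alpha-\epsilon)$. Subtracting cancels the distance term and leaves a slack strictly exceeding $\epsilon$, as required.

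The one subtle point — which I would take care to emphasize in the write-up — is that both applications of Lemma \ref{lem:equivalence} must refer to the \emph{same} representative auditor $j^{t,s}$ in order for the $d^{j^{t,s}}(\rho^{t,1},\rho^{t,2})$ terms to cancel. Inspecting the construction in the proof of that lemma, namely $s_{x,x'}(j^1,\dots,j^m) = i_{\lceil\gamma m\rceil}$ where $i_1,\dots,i_m$ is the ordering of the panel members by $d^{j^i}(x,x')$, shows that the index $s$ depends only on the pair $(x,x')$ and the panel composition, and is independent of both the policy being audited and the threshold. Hence a single $j^{t,s}$ mediates $(\alpha,\gamma)$- and $(\alpha-\epsilon,\gamma)$-violations alike, the cancellation goes through, and the remainder of the argument is bookkeeping.
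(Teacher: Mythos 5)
Your proposal is correct and follows essentially the same route as the paper's own proof: reduce the double minimum to a single fixed comparator in $Q_{\alpha-\epsilon,\gamma}$, cancel the $Error$ terms so that only the $C\cdot[\pi(\rho^{t,1})-\pi(\rho^{t,2})]$ terms remain, and then argue round-by-round, using Lemma \ref{lem:equivalence} to pass from the panel to the representative auditor $j^{t,s}$ for both $\pi^t$ (which has an $\alpha$-violation) and the comparator (which has no $(\alpha-\epsilon)$-violation), so the distance terms cancel and each violating round contributes at least $\epsilon$. The only differences are cosmetic --- you instantiate the comparator as the error minimizer $\pi^\dagger$ up front where the paper proves the inequality for every $\pi^*\in Q_{\alpha-\epsilon,\gamma}$, and you make explicit the (correct, and implicitly used in the paper) observation that the representative index $s_{x,x'}$ depends only on the pair and the panel, not on the policy or the threshold.
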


\paragraph{High-level proof idea} It is sufficient to prove that for every $\pi^*\in Q_{\alpha-\epsilon,\gamma}$, if we set $\pi^*$ as the comparator (instead of taking the minima for each the error regret and the Lagrangian regret), the inequality holds. We will hence fix such $\pi^*\in Q_{\alpha-\epsilon,\gamma}$, and using Definition \ref{def:Lagrangian}, see that the sums of the $Error$ terms in both sides of the inequality cancel out. We will then divide the analysis to two cases: rounds on which no $(\alpha,\gamma)$-violation was detected, and rounds where such a violation was detected. For the first type, the equality holds, since by definition of the panel, $\rho^{t,1}=\rho^{t,2}$. For the second type, the left hand side of the equality is simply $C\epsilon$. As for the right hand side, we first use Lemma \ref{lem:equivalence} from Section \ref{subsec:equivalence} to move from panels to carefully defined ``representative'' single auditors, then argue that the right hand side of the equality is at least $C\epsilon$, since $\pi^*$ is guaranteed to have no $(\alpha-\epsilon)$-violations with respect to any of the ``representative'' auditors of the panels in the interaction.

\begin{proof}[Proof of Lemma \ref{lem:joint-loss}]

To prove the lemma, it is sufficient to prove that for every $\pi^*\in Q_{\alpha-\epsilon,\gamma}$,
\begin{align*}
& C\epsilon\sum_{t=1}^T Unfair^{\alpha,\gamma}(\pi^t,\bar{x}^t,\bar{j}^t) + \sum_{t=1}^T Error(\pi^t,\bar{x}^t,\bar{y}^t) - \sum_{t=1}^T Error(\pi^*,\bar{x}^t,\bar{y}^t)\\
&\leq \sum_{t=1}^T L_{C,\rho^t}(\pi^t,\bar{x}^t,\bar{y}^t) - \sum_{t=1}^T L_{C,\rho^t}(\pi^*,\bar{x}^t,\bar{y}^t).
\end{align*}

Which, using Definition \ref{def:Lagrangian}, is equivalent to proving that
\[
C\epsilon\sum_{t=1}^T Unfair^{\alpha,\gamma}(\pi^t,\bar{x}^t,\bar{j}^t) \leq \sum_{t=1}^T C\cdot\left[\pi^t(\rho^{t,1})-\pi^t(\rho^{t,2})\right] - \sum_{t=1}^T C\cdot\left[\pi^*(\rho^{t,1})-\pi^*(\rho^{t,2})\right].
\]

We consider two cases:
\begin{enumerate}
    \item For rounds $t$ where the panel $\bar{j}^t$ did not detect any $(\alpha,\gamma)$-fairness violations, the left hand side of the inequality is 0, and so is the right hand side, since $\rho^{t,1} = \rho^{t,2}$.
    \item For rounds $t$ where the panel $\bar{j}^t$ detected an $(\alpha,\gamma)$-violation, the left hand side is equal to $C\epsilon$, and the right hand side is at least $C\epsilon$, since, using Lemma \ref{lem:equivalence} and Definition \ref{def:comparator-set}, we know that
    \begin{equation} \label{eq:alpha-violation}
    \pi^t(\rho^{t,1})-\pi^t(\rho^{t,2}) > d^{s_{\rho^{t,1},\rho^{t,2}}(j^{t,1},\dots,j^{t,m})}(\rho^{t,1},\rho^{t,2}) + \alpha
    \end{equation}
    And
    \begin{equation} \label{eq:no-violation}
    -(\pi^*(\rho^{t,1})-\pi^*(\rho^{t,2}))\geq \epsilon-\alpha - d^{s_{\rho^{t,1},\rho^{t,2}}(j^{t,1},\dots,j^{t,m})}(\rho^{t,1},\rho^{t,2})
    \end{equation}
    Hence, combining Equation \ref{eq:alpha-violation} and Equation \ref{eq:no-violation}, we get
    \begin{align*}
    &\pi^t(\rho^{t,1})-\pi^t(\rho^{t,2})-(\pi^*(\rho^{t,1})-\pi^*(\rho^{t,2}))\\
    &\geq d^{s_{\rho^{t,1},\rho^{t,2}}(j^{t,1},\dots,j^{t,m})}(\rho^{t,1},\rho^{t,2}) + \alpha + \epsilon-\alpha - d^{s_{\rho^{t,1},\rho^{t,2}}(j^{t,1},\dots,j^{t,m})}(\rho^{t,1},\rho^{t,2})\\
    &\geq \epsilon.
    \end{align*}
    
\end{enumerate}

The lemma hence follows.
\end{proof}

\subsection{Bounds for Specific Algorithms} \label{subsec:algorithms}

In this section, we present two algorithms for the contextual combinatorial semi-bandit setting (Algorithm \ref{alg:ccsb}), and show how they can be leveraged to establish accuracy and fairness guarantees in the setting of individually fair online learning with one-sided feedback (Algorithm \ref{alg:apple-tasting}). In the following, we use the notation $\Vert \ell^t\Vert_* =\max_{a\in A^t}\left\vert \left\langle \ell^t,a \right\rangle\right\vert$, and use $\tilde{O}$ to hide logarithmic factors.

\subsubsection{Exp2}
We begin by presenting the Exp2 algorithm \citep{BubeckCK12, DaniHK07,Cesa-BianchiL09} and showing how it can be adapted to our setting.\footnote{The contextual combinatorial semi-bandit setting considered in this paper subsumes the standard contextual $k$-armed bandit setting. To see this, consider the case where $A^t = A = \{a^{t,i} = (\mathbbm{1}[i=1],\dots,\mathbbm{1}[i=k]):i\in[k]\}$. Naively applying the classical EXP4 algorithm for contextual bandits in the combinatorial semi-bandit setting would result in a regret bound of $O(\sqrt{\vert\cH\vert T})$, whose square root dependence on $\vert\cH\vert$ we prefer to avoid.}
Exp2 is an adaptation of the classical exponential weights algorithm \citep{LittlestoneW94,AuerCFS95,Vovk90,FreundS97,cesa}, which, in order to cope with the semi-bandit nature of the online setting, leverages the linear structure of the loss functions in order to share information regarding the observed feedback between all experts (hypotheses in $\cH$). Such information sharing is then utilized in decreasing the variance in the formed loss estimators, resulting in a regret rate that depends only logarithmically on $\vert \cH \vert$.

\begin{theorem}[via \citet{BubeckCK12}]\label{thm:exp2}
The expected regret of Exp2 in the contextual combinatorial semi-bandit setting, against any adaptively and adversarially chosen sequence of contexts and linear losses such that $\Vert \ell^t\Vert_*\leq 1$, is at most:
\[
Regret(T) \leq O\left(\sqrt{kT\log\vert\cH\vert}\right).
\]
\end{theorem}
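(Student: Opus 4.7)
The plan is to adapt the standard analysis of Exp2 for combinatorial semi-bandits \citep{BubeckCK12} to the contextual setting. I treat each hypothesis $h \in \cH$ as an expert whose (context-dependent) action at round $t$ is $a^{t}_h := (h(\bar x^{t,1}),\ldots,h(\bar x^{t,k})) \in \{0,1\}^k$. Exp2 maintains exponential weights $w^t(h) \propto \exp\bigl(-\eta \sum_{s<t}\langle a^{s}_h, \hat \ell^s\rangle\bigr)$ over $\cH$. After observing $\bar x^t$, it samples $h^t$ from the distribution $\pi^t$ obtained by mixing $w^t/Z^t$ with a small exploration distribution, plays $a^{t}_{h^t}$, and observes $\ell^{t,i}$ on each coordinate $i$ such that $h^t(\bar x^{t,i})=1$. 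The contextual aspect of the setting is immaterial to the state update, since the loss remains linear in the chosen action at every round.

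The analysis then follows three standard steps. First, build an unbiased estimator of $\ell^t$ by inverse-probability weighting on the observed coordinates,
\[
\hat\ell^{t,i} := \frac{\ell^{t,i}\,\mathbbm{1}[h^t(\bar x^{t,i})=1]}{q^{t,i}}, \qquad q^{t,i} := \Pr_{h \sim \pi^t}[h(\bar x^{t,i})=1],
\]
so that $\E_{h^t \sim \pi^t}[\langle a^{t}_h, \hat \ell^t \rangle] = \langle a^{t}_h, \ell^t\rangle$ for every fixed $h\in\cH$. Second, apply the standard exponential weights inequality, which only uses linearity of the loss in the expert's action and produces an expected pseudo-regret bound of the form $\tfrac{\log|\cH|}{\eta} + \eta\sum_t \E\bigl[\langle a^{t}_{h^t}, \hat \ell^t\rangle^2\bigr]$ (plus an $O(\lambda T)$ overhead from exploration at rate $\lambda$). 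Third, invoke the semi-bandit variance calculation: expanding the square, using the independence/orthogonality afforded by the semi-bandit feedback structure, and the assumption $\|\ell^t\|_*\leq 1$, one obtains a per-round variance of $O(k)$. Tuning $\eta \asymp \sqrt{\log|\cH|/(kT)}$ (and $\lambda$ correspondingly) then yields the claimed $O(\sqrt{kT\log|\cH|})$ rate.

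The main technical obstacle is ensuring $q^{t,i}$ is bounded below whenever some $h \in \cH$ makes a positive prediction on $\bar x^{t,i}$; otherwise the variance of $\hat\ell^{t,i}$ blows up. In the non-contextual combinatorial semi-bandit setting this is usually resolved via a barycentric spanner over the action set, but here $A^t$ changes with the context and such a spanner would have to be recomputed online. I would instead exploit the assumption that $\cH$ contains a constant hypothesis to enable a context-agnostic exploration: mixing $\pi^t$ with probability $\lambda$ of playing the constant-$1$ hypothesis forces $q^{t,i} \geq \lambda$ uniformly across contexts, and the resulting $O(\lambda T)$ bias is absorbed into the final bound upon optimizing $\lambda$.
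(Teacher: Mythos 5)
The paper does not actually prove this statement: Theorem~\ref{thm:exp2} is imported wholesale from \citet{BubeckCK12} (note the ``via'' in the theorem header), so there is no in-paper proof to compare against. Your reconstruction is essentially the standard Exp2 semi-bandit analysis from that line of work and is sound in outline: importance-weighted coordinate-wise estimators, the second-order exponential-weights inequality, and a per-round variance bound of $O(k)$, tuned with $\eta \asymp \sqrt{\log|\cH|/(kT)}$. Two fine points are worth flagging. First, your worry about lower-bounding $q^{t,i}$ is largely moot in the \emph{semi}-bandit case with non-negative loss estimates: the $1/q^{t,i}$ factors cancel exactly in the variance term ($\sum_i q^{t,i}\,\E[(\hat\ell^{t,i})^2] = \sum_i (\ell^{t,i})^2$), and the inequality $e^{-x} \le 1 - x + x^2/2$ for $x \ge 0$ needs no floor on the sampling probabilities; explicit exploration (John's or barycentric-spanner based) is what the \emph{full-bandit} Exp2 analysis requires. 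Your constant-hypothesis mixing is harmless but unnecessary for the rate. Second, the claimed $O(k)$ per-round variance genuinely uses the normalization $\Vert\ell^t\Vert_* \le 1$ together with non-negativity and the presence of the all-ones action (so that $\sum_i(\ell^{t,i})^2 \le (\sum_i \ell^{t,i})^2 \le 1$); if you only assumed $\ell^{t,i}\in[0,1]$ coordinate-wise, the Cauchy--Schwarz step would give a per-round variance of $k^2$ and a weaker $O(k\sqrt{T\log|\cH|})$ bound. Make sure that step of your argument explicitly invokes the $\Vert\ell^t\Vert_*$ assumption rather than coordinate-wise boundedness.
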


Next, we show how, when leveraging our reduction as described in Section \ref{sec:reduction}, Exp2 can be utilized to provide multi-criteria guarantees, simultaneously for accuracy and fairness.

\begin{theorem}\label{thm:exp2-conclusion}
In the setting of individually fair online learning with one-sided feedback (Algorithm \ref{alg:apple-tasting}), running Exp2 for contextual combinatorial semi-bandits (Algorithm \ref{alg:ccsb}) while using the sequence $(a^t,\ell^t)_{t=1}^T$ generated by the reduction in Algorithm \ref{alg:reduction} (when invoked each round using $\bar{x}^t$, $\bar{y}^t$, $h^t$, $\rho^t$, and $C=T^\frac{1}{5}$), yields the following guarantees, for any $\epsilon\in[0,\alpha]$, simultaneously:

\begin{enumerate}
    \item \textbf{Accuracy:} $Regret^{err}(\text{Exp2},T,Q_{\alpha-\epsilon,\gamma}) \leq O\left(k^\frac{3}{2}T^\frac{4}{5}\log\vert\cH\vert^\frac{1}{2}\right)$.
    \item \textbf{Fairness:} 
    $\sum_{t=1}^T Unfair^{\alpha,\gamma}(\pi^t,\bar{x}^t,\bar{j}^t) \leq O\left(\frac{1}{\epsilon}k^\frac{3}{2}T^\frac{4}{5}\log\vert\cH\vert^\frac{1}{2}\right)$.
\end{enumerate}
\end{theorem}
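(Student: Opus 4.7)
The plan is to chain three ingredients already in place: Exp2's regret bound (Theorem~\ref{thm:exp2}), the reduction to the semi-bandit problem (Theorem~\ref{thm:reduction}), and the Lagrangian decomposition (Lemma~\ref{lem:joint-loss}).

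First, I would apply Theorem~\ref{thm:exp2} to the contextual combinatorial semi-bandit instance output by Algorithm~\ref{alg:reduction}. The action and loss vectors live in dimension $2k+4C$, with $\Vert \ell^t\Vert_* \leq 2k+4C$; rescaling the losses by $1/(2k+4C)$ to satisfy Exp2's normalization assumption yields
\[
R^{\mathrm{Exp2},T,\cH} \leq O\bigl(\sqrt{(2k+4C)\,T\log|\cH|}\bigr).
\]
Substituting into Theorem~\ref{thm:reduction}, whose multiplicative $2(2k+4C)$ factor precisely absorbs this loss rescaling, produces a Lagrangian regret bound of $O\bigl((2k+4C)^{3/2}\sqrt{T\log|\cH|}\bigr)$. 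Setting $C=T^{1/5}$ and applying $(a+b)^{3/2}\leq O(a^{3/2}+b^{3/2})$ together with $C^{3/2}\sqrt{T}=T^{4/5}$ and $k^{3/2}\sqrt{T}\leq k^{3/2}T^{4/5}$ simplifies this to $O\bigl(k^{3/2}T^{4/5}\sqrt{\log|\cH|}\bigr)$.

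Next I would invoke Lemma~\ref{lem:joint-loss}, which states
\[
C\epsilon \sum_{t=1}^T \mathit{Unfair}^{\alpha,\gamma}(\pi^t,\bar{x}^t,\bar{j}^t) + \mathit{Regret}^{err}(\mathrm{Exp2},T,Q_{\alpha-\epsilon,\gamma}) \leq \text{(Lagrangian regret)},
\]
and split the bound into its two components. Since $\mathit{Unfair}\geq 0$, the accuracy guarantee $\mathit{Regret}^{err}\leq O(k^{3/2}T^{4/5}\sqrt{\log|\cH|})$ is immediate. For the fairness guarantee, I would combine this inequality with the trivial lower bound $\mathit{Regret}^{err}\geq -Tk$ (each round's error lies in $[0,k]$) to get
\[
\sum \mathit{Unfair} \leq \frac{\text{(Lagrangian regret)} + Tk}{C\epsilon} = O\!\left(\frac{k^{3/2}T^{4/5}\sqrt{\log|\cH|} + kT}{T^{1/5}\epsilon}\right) \leq O\!\left(\frac{k^{3/2}T^{4/5}\sqrt{\log|\cH|}}{\epsilon}\right),
\]
where the last step uses $kT/T^{1/5}=kT^{4/5}$ and $k\leq k^{3/2}$.

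The only real subtlety is the loss-normalization bookkeeping: Exp2's $O(\sqrt{kT\log|\cH|})$ bound presumes $\Vert\ell\Vert_*\leq 1$, whereas Algorithm~\ref{alg:reduction} naturally produces losses with norm up to $2k+4C$, and one must track this factor carefully in combination with the $2(2k+4C)$ scaling in Theorem~\ref{thm:reduction}. Once that is done, the rest is routine algebra, and the choice $C=T^{1/5}$ is exactly the balance point where the Lagrangian regret's $C^{3/2}\sqrt{T}$ growth and the fairness bound's $T/C$ contribution both hit the same $T^{4/5}$ rate---any smaller $C$ makes fairness grow linearly in $T$, any larger $C$ makes accuracy do so.
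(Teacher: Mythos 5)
Your proposal is correct and follows essentially the same route as the paper: chain Theorem~\ref{thm:exp2} (applied in dimension $2k+4C$ with loss normalization) through Theorem~\ref{thm:reduction} to get the Lagrangian regret bound $O\bigl((2k+4C)^{3/2}\sqrt{T\log|\cH|}\bigr)$, then split via Lemma~\ref{lem:joint-loss} using $\mathit{Unfair}\geq 0$ for accuracy and $\mathit{Regret}^{err}\geq -kT$ for fairness, with $C=T^{1/5}$ balancing the two rates. Your explicit bookkeeping of the rescaling factor is a point the paper leaves implicit, but the argument is the same.
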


\begin{proof}[Proof of Theorem \ref{thm:exp2-conclusion}]
Combining Theorems \ref{thm:reduction}, \ref{thm:exp2}, we know that
\begin{align*}
\sum_{t=1}^T L_{C,\rho^t}(\pi^t,\bar{x}^t,\bar{y}^t) - \min_{\pi^*\in Q_{\alpha-\epsilon,\gamma}}\sum_{t=1}^T L_{C,\rho^t}(\pi^*,\bar{x}^t,\bar{y}^t) \leq  O\left((2k+4C)^{\frac{3}{2}}\sqrt{T\log\vert\cH\vert}\right).
\end{align*}

Setting $C=T^\frac{1}{5}$, and using Lemma \ref{lem:joint-loss}, we get
\begin{align*}
Regret^{err}(\text{Exp2},T,Q_{\alpha-\epsilon,\gamma})
&\leq O\left((2k+4C)^{\frac{3}{2}}\sqrt{T\log\vert\cH\vert}\right) - C\epsilon\sum_{t=1}^T Unfair^{\alpha,\gamma}(\pi^t,\bar{x}^t,\bar{j}^t)\\
&\leq O\left((2k+4C)^{\frac{3}{2}}\sqrt{T\log\vert\cH\vert}\right)\\
&\leq O\left(k^\frac{3}{2}T^\frac{4}{5}\log\vert\cH\vert^\frac{1}{2}\right).
\end{align*}

And,
\begin{align*}
\sum_{t=1}^T Unfair^{\alpha,\gamma}(\pi^t,\bar{x}^t,\bar{j}^t) &\leq \frac{1}{C\epsilon}\left[O\left((2k+4C)^{\frac{3}{2}}\sqrt{T\log\vert\cH\vert}\right) - Regret^{err}(\text{Exp2},T,Q_{\alpha-\epsilon,\gamma})\right]\\
&\leq \frac{1}{C\epsilon}\left[O\left((2k+4C)^{\frac{3}{2}}\sqrt{T\log\vert\cH\vert}\right) + kT\right]\\
&\leq O\left(\frac{1}{\epsilon}k^\frac{3}{2}T^\frac{4}{5}\log\vert\cH\vert^\frac{1}{2}\right).
\end{align*}
\end{proof}

The guarantees of Theorem \ref{thm:exp2-conclusion} can be interpreted as follows: accuracy-wise, the resulting algorithm is competitive with the performance of the most accurate policy that is fair (i.e. in $Q_{\alpha-\epsilon,\gamma}$). Fairness-wise, the number of rounds in which there exist (one or more) fairness violations, is sub-linear.

While presenting statistically optimal performance in terms of its dependence on the number of rounds and the cardinality of the hypothesis class, Exp2 is in  general computationally inefficient, with runtime and space requirements that are linear in $\vert \cH \vert$, which is prohibitive for large hypothesis classes. We hence next propose an oracle-efficient algorithm, based on a combinatorial semi-bandit variant of the classical Follow-The-Perturbed-Leader (FTPL) algorithm \citep{KalaiV05, Hannan}.

\subsubsection{Context-Semi-Bandit-FTPL}
We next present an oracle-efficient algorithm, Context-Semi-Bandit-FTPL \citep{SyrgkanisKS16}, and construct an adaptation of it for the setting of individually fair online learning with one-side feedback. Broadly speaking, FTPL-style algorithms' approach is to solve ``perturbed'' optimization problems. Namely, at each round, the set of data samples observed so far is augmented, using carefully drawn additional noisy samples. Then, the resulting ``perturbed'' optimization problem over the augmented sample set is solved. In doing so, the procedure carefully combines the objectives of stability and error minimization, in order to provide no regret guarantees.

In order to construct an \emph{efficient} implementation of this approach in the setting of contextual combinatorial semi-bandit, Context-Semi-Bandit-FTPL assumes access to two key components: an offline optimization oracle for the base class $\cH$, and a small separator set for $\cH$. The optimization oracle assumption can be viewed equivalently as assuming access to a weighted ERM oracle for $\cH$. We next describe the small separator set assumption.

\begin{definition}[Separator set]
We say $S\subseteq\cX$ is a separator set for a class $\cH:\cX\rightarrow\{0,1\}$, if for any two distinct hypotheses $h, h'\in
\cH$, there exists $x\in S$ such that $h(x)\neq h'(x)$.
\end{definition}

\begin{remark}
Classes for which small separator sets are known include conjunctions, disjunctions, parities, decision lists, discretized linear classifiers. Please see more elaborate discussions in \citet{SyrgkanisKS16} and \citet{Neel0W19}.
\end{remark}
For the following theorem, it is assumed that Context-Semi-Bandit-FTPL has access to a (pre-computed) separator set $S$ of size $s$ for the class $\cH$, and access to an (offline) optimization oracle for $\cH$.

\begin{theorem}[via \citet{SyrgkanisKS16}] \label{thm:csb-ftpl}
The expected regret of Context-Semi-Bandit-FTPL in the contextual combinatorial semi-bandit setting, against any adaptively and adversarially chosen sequence of contexts and linear non-negative losses such that $\Vert \ell^t\Vert_*\leq 1$, is at most:
\[
Regret(T) \leq O\left(k^{\frac{7}{4}}s^{\frac{3}{4}}T^{\frac{2}{3}}\log{\vert\cH\vert}^{\frac{1}{2}}\right).
\]
\end{theorem}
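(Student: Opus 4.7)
The plan is to invoke the main regret bound of \citet{SyrgkanisKS16} essentially as a black box, after checking that our instantiation meets its hypotheses. Their analysis requires (i) linear, non-negative loss vectors with $\Vert\ell^t\Vert_*\le 1$; (ii) a per-round action set of the form $A^t=\{(h(\bar{x}^{t,1}),\ldots,h(\bar{x}^{t,k})):h\in\cH\}$, so that $\|a\|_1\le k$ and actions are $\cH$-realizable on the contexts; and (iii) access to both an offline weighted ERM oracle for $\cH$ and a separator set of size $s$. Each of these is explicitly built into the statement of Theorem \ref{thm:csb-ftpl}, so the regret bound transfers directly. The proof is therefore essentially a citation, and our only burden is verifying that no aspect of our contextual combinatorial semi-bandit formulation (Algorithm \ref{alg:ccsb}) breaks a precondition of their algorithm or analysis.

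For a self-contained sketch, I would trace through the three main pieces of the \citet{SyrgkanisKS16} argument. First, unbiased estimators of the full loss vector are constructed from the semi-bandit feedback via geometric resampling: since action-selection probabilities under FTPL are not computable in closed form, one repeatedly redraws the perturbation and re-invokes the optimization oracle, stopping the first time the target coordinate is predicted as $1$; the resulting stopping time is an unbiased estimator of the inverse selection probability. Second, a standard FTPL ``be-the-leader'' decomposition splits the regret into a perturbation term of order $s\log|\cH|/\eta$ (noise is planted only on the $s$ points of the separator set, with scale governed by $\eta$) and a stability term capturing the expected round-to-round drift of the perturbed leader, controlled by the magnitude of the loss estimators. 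Third, and most delicate, the second moment of the loss estimators is bounded using the separator-set structure together with a uniform-exploration mixing at rate $\mu$, which is precisely where the separator-set size $s$ enters the variance bound.

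Balancing the perturbation scale $\eta$, the exploration rate $\mu$, and the horizon $T$ according to \citet{SyrgkanisKS16}'s prescribed schedule then yields the stated $\tilde{O}(k^{7/4} s^{3/4} T^{2/3}\sqrt{\log|\cH|})$ rate. The main conceptual obstacle, were one reproving the bound from scratch rather than invoking it, is the variance control in the third step, which couples the combinatorial structure of the action set, the separator-set structure, and the recursion inherent in geometric resampling. Since this is exactly the analysis carried out in \citet{SyrgkanisKS16}, our theorem follows as a direct corollary, and no modification of their proof is required.
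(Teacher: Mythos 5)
The paper does not prove this theorem at all---it is imported verbatim from \citet{SyrgkanisKS16} as an external black-box guarantee, which is exactly the route you take, so your proposal matches the paper's (non-)proof. Your additional sketch of the internal argument (geometric resampling for the inverse-probability estimates, separator-set perturbation in the be-the-leader decomposition, and the variance balancing that yields the $T^{2/3}$ rate) is a reasonable summary of the cited analysis but is not something the paper itself supplies or relies on.
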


We note that Context-Semi-Bandit-FTPL does not, at any point, maintain its deployed distribution over the class $\cH$ explicitly. Instead, on each round, it ``samples'' a hypothesis according to such (implicit) distribution --- where the process of perturbing then solving described above can equivalently be seen as sampling a single hypothesis from such underlying distribution over $\cH$.

\paragraph{Resampling-based adaptation} For our purposes, however, we will have to adapt the implementation of Context-Semi-Bandit-FTPL so that the process of sampling the hypothesis at each round is repeated, and we are able to form an accurate enough empirical estimate of the implicit distribution. This is required for two reasons: first, as we wish to compete with the best fair \emph{policy} in $\Delta\cH$, rather than only with the best fair classifier in $\cH$ (we elaborate on this point in Lemma \ref{lem:gap}). Second, as it is observed in general (see, e.g. the discussion in \citet{NeuB13}), the specific weights this implicit distribution places on each of $h\in\cH$ cannot be expressed in closed-form.

We therefore next construct an adaptation we term Context-Semi-Bandit-FTPL-With-Resampling, which is based on resampling the hypothesis $R$ times and deploying the empirical estimate $\hat{\pi}^t$ of the (implicit) underlying distribution $\pi^t$. This adaptation is summarized in Algorithm \ref{alg:Adapted-Apple-Tasting} and Algorithm \ref{alg:CSB-FTPL-With-Resampling} below, and yields the following guarantee.

\begin{theorem}\label{thm:FTPL-conclusion}
In the setting of individually fair online learning with one-sided feedback (Algorithm \ref{alg:apple-tasting}), running Context-Semi-Bandit-FTPL-With-Resampling for contextual combinatorial semi-bandit (Algorithm \ref{alg:CSB-FTPL-With-Resampling}) as specified in Algorithm \ref{alg:Adapted-Apple-Tasting}, with $R=T$, and using the sequence $(\ell^t,a^t)_{t=1}^T$ generated by the reduction in Algorithm \ref{alg:reduction} (when invoked on each round using $\bar{x}^t$, $\bar{y}^t$, $\hat{h}^t$, $\hat{\rho}^t$, and $C=T^\frac{4}{45}$), yields, with probability $1-\delta$, the following guarantees, for any $\epsilon\in[0,\alpha]$, simultaneously:

\begin{enumerate}
    \item \textbf{Accuracy:} $Regret^{err}(\text{CSB-FTPL-WR},T,Q_{\alpha-\epsilon,\gamma}) \leq \tilde{O}\left(k^\frac{11}{4}s^\frac{3}{4}T^\frac{41}{45}\log\vert\cH\vert^\frac{1}{2}\right)$.
    \item \textbf{Fairness:} $\sum_{t=1}^T Unfair^{\alpha,\gamma}(\hat{\pi}^t,\bar{x}^t,\bar{j}^t) \leq \tilde{O}\left(\frac{1}{\epsilon}k^\frac{11}{4}s^\frac{3}{4}T^\frac{41}{45}\log\vert\cH\vert^\frac{1}{2}\right)$.
\end{enumerate}
\end{theorem}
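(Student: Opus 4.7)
The plan is to mirror the proof of Theorem \ref{thm:exp2-conclusion}, replacing Theorem \ref{thm:exp2} with Theorem \ref{thm:csb-ftpl}, while handling the new complication that Context-Semi-Bandit-FTPL only exposes its distribution $\pi^t$ through samples, so we must control the deviation between $\pi^t$ and the empirical estimate $\hat{\pi}^t$ formed from $R=T$ resamples. First, I would check that the losses produced by Algorithm \ref{alg:reduction} are linear and non-negative (which they are by construction, since both halves of $\ell^t$ lie in $[0,1]$), so that Theorem \ref{thm:csb-ftpl} applies to the reduced $(2k+4C)$-dimensional semi-bandit problem. Combining Theorem \ref{thm:reduction} (whose $2(2k+4C)$ factor accounts for the scale of $\|\ell^t\|_*$ in the reduction) with Theorem \ref{thm:csb-ftpl} then yields, for the implicit distributions $\pi^t$,
\[
\sum_{t=1}^T L_{C,\hat{\rho}^t}(\pi^t,\bar{x}^t,\bar{y}^t) - \min_{\pi^*\in Q_{\alpha-\epsilon,\gamma}}\sum_{t=1}^T L_{C,\hat{\rho}^t}(\pi^*,\bar{x}^t,\bar{y}^t) \leq O\!\left((2k+4C)^{11/4} s^{3/4} T^{2/3} \log|\cH|^{1/2}\right).
\]

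Next, I would control the resampling error. Since $\hat{\pi}^t(x)$ is the empirical mean of $R=T$ i.i.d.\ Bernoulli draws with mean $\pi^t(x)$, Hoeffding's inequality together with a union bound over the $O(kT)$ contexts and rounds appearing in the interaction delivers $|\hat{\pi}^t(x)-\pi^t(x)|\leq \tilde{O}(1/\sqrt{T})$ uniformly with probability at least $1-\delta$. This has two consequences. First, whenever the panel reports an $(\alpha,\gamma)$-violation on $\hat{\pi}^t$ at $\hat{\rho}^t$, the implicit $\pi^t$ itself has at least an $(\alpha-\tilde{O}(1/\sqrt{T}),\gamma)$-violation at the same pair; any comparator in $Q_{\alpha-\epsilon,\gamma}$ is insensitive to this $1/\sqrt{T}$ slack for the choice of $T$ used here. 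Second, the per-round gap between $C\cdot[\hat{\pi}^t(\hat{\rho}^{t,1})-\hat{\pi}^t(\hat{\rho}^{t,2})]$ and the corresponding quantity under $\pi^t$ is $\tilde{O}(C/\sqrt{T})$, so the total slack introduced by replacing $\pi^t$ with $\hat{\pi}^t$ across all $T$ rounds is $\tilde{O}(C\sqrt{T})$, which is dominated by the main Lagrangian regret term above.

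Finally, I would invoke Lemma \ref{lem:joint-loss} on the Lagrangian regret bound, arriving at the simultaneous inequality
\[
C\epsilon \sum_{t=1}^T Unfair^{\alpha,\gamma}(\hat{\pi}^t,\bar{x}^t,\bar{j}^t) + Regret^{err}(\mathrm{CSB\text{-}FTPL\text{-}WR},T,Q_{\alpha-\epsilon,\gamma}) \leq \tilde{O}\!\left((2k+4C)^{11/4} s^{3/4} T^{2/3} \log|\cH|^{1/2}\right).
\]
The accuracy bound follows by discarding the non-negative unfairness term; the fairness bound follows by isolating the unfairness sum and bounding $Regret^{err}$ trivially by $kT$, then dividing by $C\epsilon$. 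Plugging in $C=T^{4/45}$ balances $(2k+4C)^{11/4}T^{2/3}$ against the $kT/(C\epsilon)$ contribution so that both bounds become $\tilde{O}(k^{11/4}s^{3/4}T^{41/45}\log|\cH|^{1/2})$ (resp.\ divided by $\epsilon$), as claimed. The main obstacle is carefully bookkeeping the resampling concentration through both directions of Lemma \ref{lem:joint-loss}: the panel's judgement is against $\hat{\pi}^t$ while the regret analysis bounds quantities involving $\pi^t$, and the choices $R=T$ and $C=T^{4/45}$ are picked precisely so that the $\tilde{O}(C\sqrt{T})$ resampling deviation is absorbed into, rather than dominates, the $\tilde{O}(T^{41/45})$ main term.
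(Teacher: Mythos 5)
Your proposal is correct and follows essentially the same route as the paper: combine Theorem \ref{thm:reduction} with Theorem \ref{thm:csb-ftpl} to bound the Lagrangian regret of the implicit policies $\pi^t$, transfer to the empirical policies $\hat{\pi}^t$ via a Hoeffding-plus-union-bound concentration argument over the $kT$ contexts (the paper's Lemma \ref{lem:estimation}, which crucially holds for any pair $\hat{\rho}^t$ chosen from $\bar{x}^t$ after $\hat{\pi}^t$ is realized), and then apply Lemma \ref{lem:joint-loss} with the stated choice of $C$ to split the bound into the accuracy and fairness guarantees. Your intermediate remark about $(\alpha-\tilde{O}(1/\sqrt{T}),\gamma)$-violations transferring to $\pi^t$ is unnecessary---since both the unfairness loss and the Lagrangian are evaluated at $\hat{\pi}^t$ with the panel reporting on $\hat{\pi}^t$, Lemma \ref{lem:joint-loss} applies directly---but this does not affect the validity of the argument.
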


We next describe the adaptation of Context-Semi-Bandit-FTPL \citep{SyrgkanisKS16} to our setting. Context-Semi-Bandit-FTPL relies on access to an optimization oracle for the corresponding (offline) problem. We elaborate on the exact implementation of this oracle in our setting next. 

\begin{definition}[Optimization oracle]
Context-Semi-Bandit-FTPL assumes access to an oracle of the form
\[
M((\bar{\bar{x}}^t)_{t=1}^N,(\hat{\ell}^t)_{t=1}^N) = \argmin\limits_{h\in\cH} L(h,(\bar{\bar{x}}^t,\hat{\ell}^t)),
\]
where $\hat{\ell}^t$ denotes the loss estimates held by Context-Semi-Bandit-FTPL for round $t$, and $L$ denotes the cumulative loss, over linear loss functions of the form $f^t(a) =\left\langle a,\ell\right\rangle$, where $\ell$ is a non-negative vector. In our construction, this is equivalent to
\begin{align*}
&\argmin\limits_{h\in\cH} L(h^t,(\bar{\bar{x}}^t,\hat{\ell}^t))\\
&:= \argmin\limits_{h\in\cH}\sum_{t=1}^N\left\langle a^t_h,\hat{\ell}^t \right\rangle &&\text{(Definition of L)}\\
&= \argmin\limits_{h\in\cH}\sum_{t=1}^N\sum_{i=1}^{k+2C} h(\bar{\bar{x}}^{t,i})\cdot \hat{\ell}^{t,i} + (1-h(\bar{\bar{x}}^{t,i}))\cdot \frac{1}{2} &&\text{(Algorithm  \ref{alg:reduction})}\\
&= \argmin\limits_{h\in\cH}\sum_{t=1}^N\sum_{i=1}^{k+2C} h(\bar{\bar{x}}^{t,i})\cdot (\hat{\ell}^{t,i}-\frac{1}{2}) &&\text{(Subtraction of constant)}
.
\end{align*}
\end{definition}

Context-Semi-Bandit-FTPL operates by, at each round, first sampling a set of ``fake'' samples $z^t$, that is added to the history of observed contexts and losses by the beginning of round $t$, denoted by $H^t$. The algorithm then invokes the optimization oracle on the extended set $z^t \cup H^t$, and deploys $h^t\in\cH$ that is returned by the oracle.  

Equivalently, this process can be seen as the learner, at the beginning of each round $t$, (implicitly) deploying a distribution over hypotheses from the base class $\cH$, denoted by $\pi^t$, then sampling and deploying a single hypothesis $h^t\sim\pi^t$. As it is observed in general (see, e.g., \citet{NeuB13}), the specific weights this implicit distribution places on each of $h\in\cH$ on any given round cannot be expressed in closed-form. Instead, FTPL-based algorithms rely on having sampling access to actions from the distribution in obtaining expected no regret guarantees. 

For our purposes, however, such a method of assessing the loss on realized (single) hypotheses $h^t\sim\pi^t$ could be problematic, since we rely on the panel $\bar{j}^t$ reporting its feedback upon observing the actual distribution $\pi^t$. Querying the panel instead using realizations $h^t\sim\pi^t$ could lead to an over-estimation of the unfairness loss, as we demonstrate next.

\begin{lemma}\label{lem:gap}
There exist $\alpha,\gamma, m, k > 0$, $\cH:\cX\rightarrow\{0,1\}$, $\bar{x}\in\cX^k$, $\bar{j}:\cX^k\rightarrow\cX^2$, and $\pi\in\Delta\cH$ for which, simultaneously,
\begin{enumerate}
    \item $\E\limits_{h\sim\pi}\left[unfair^{\alpha,\gamma}(h,\bar{x},\bar{j})\right] = 1$.
    \item $unfair^{\alpha,\gamma}(\pi,\bar{x},\bar{j}) = 0$.
\end{enumerate}
\end{lemma}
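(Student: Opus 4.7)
My plan is to prove Lemma \ref{lem:gap} by exhibiting a small explicit example. The underlying phenomenon is that for a deterministic hypothesis $h$, the value $h(x) \in \{0,1\}$, so differences $h(x) - h(x')$ are always $0$ or $1$; a mixture $\pi$, however, can have $\pi(x) = \pi(x')$ even when it is supported on two classifiers that disagree with opposite signs on the pair $(x, x')$. This is precisely the non-linearity that makes $\mathrm{Unfair}(\cdot, \bar{x}, \bar{j})$ fail to commute with expectations over $h \sim \pi$, and it is what drives the gap.

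Concretely, I would take $k = 2$, set $\cX = \{x_1, x_2\}$, $\bar{x} = (x_1, x_2)$, and let $\cH = \{h_1, h_2\}$ be defined by $h_1(x_1) = 1$, $h_1(x_2) = 0$ and $h_2(x_1) = 0$, $h_2(x_2) = 1$. Let the panel consist of a single auditor ($m = 1$, so take $\gamma = 1$) with implicit distance $d(x_1, x_2) = 0$, and set $\alpha = 0$. Let $\pi$ be uniform over $\{h_1, h_2\}$, so that $\pi(x_1) = \pi(x_2) = 1/2$.

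The verification proceeds in two short steps. For the randomized policy, $\pi(x_1) - \pi(x_2) = 0$ and $\pi(x_2) - \pi(x_1) = 0$, neither of which strictly exceeds $d(x_1, x_2) + \alpha = 0$; hence $V^{\bar{j}} = \emptyset$, so $\bar{j}(\pi, \bar{x}) = (v, v)$ and $\mathrm{Unfair}^{\alpha, \gamma}(\pi, \bar{x}, \bar{j}) = 0$. For a sampled deterministic hypothesis, treating $h$ as the point mass $\delta_h$: when $h = h_1$ we have $h_1(x_1) - h_1(x_2) = 1 > 0 = d(x_1, x_2) + \alpha$, so $(x_1, x_2) \in V^{\bar{j}}$; when $h = h_2$, similarly $(x_2, x_1) \in V^{\bar{j}}$. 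Thus $\mathrm{Unfair}^{\alpha, \gamma}(h, \bar{x}, \bar{j}) = 1$ for every $h$ in the support of $\pi$, so $\E_{h \sim \pi}[\mathrm{Unfair}^{\alpha, \gamma}(h, \bar{x}, \bar{j})] = 1$.

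There is no genuine technical obstacle here, since the lemma is an existence statement; the only care required is to match the definitions precisely, in particular that the panel's violation set is defined via a \emph{strict} inequality, which is why $d(x_1, x_2) = 0$ with $\alpha = 0$ suffices to keep $\pi$ fair while making every pure classifier unfair. If desired, one can note as a remark that by replicating this gadget across disjoint pairs of individuals and letting $\pi$ mix uniformly over the corresponding classifiers, the additive gap between $\E_{h \sim \pi}[\mathrm{Unfair}]$ and $\mathrm{Unfair}(\pi)$ can be made as large as needed, reinforcing the motivation in the main text for the resampling-based adaptation of Context-Semi-Bandit-FTPL.
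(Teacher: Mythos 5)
Your construction is essentially the same as the paper's: the paper also takes $k=2$, a two-point context space, two "opposite" classifiers mixed uniformly (so the policy's acceptance probabilities are equal on the pair while every realized hypothesis separates them by $1$), and a single auditor with $m=1$, $\gamma=1$. The one caveat is that the lemma as stated quantifies over $\alpha>0$, whereas you take $\alpha=0$; your argument only needs $1 > d(x_1,x_2)+\alpha$ and $0 \not> d(x_1,x_2)+\alpha$, so substituting, e.g., $\alpha=0.2$ and $d(x_1,x_2)=0.1$ (the paper's choice) repairs this without altering anything else.
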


We defer the proof of Lemma \ref{lem:gap} to Appendix \ref{app:algorithms}.

We therefore adapt Context-Semi-Bandit-FTPL to our setting by adding a resampling process at each iteration of the algorithm. Our approach is similar in spirit to the resampling-based approach in \citet{BechavodJW20} (which offer an adaptation for the full information variant of the algorithm), however, unlike their suggested scheme, which requires further restricting the power of the adversary to, at each round $t$, not depend on the policy $\pi^t$ deployed by the learner (instead, they only allow dependence on the history of the interaction until round $t-2$), the adaptation we next propose would not require such a relaxation.

We next abstract out the implementation details of the original Context-Semi-Bandit-FTPL that remain unchanged (namely, the addition of ``fake'' samples, and solving of the resulting optimization problem at the beginning of each round, and the loss estimation process at the end of it), to focus on the adaptation. 

Our adaptation will work as follows: the learner initializes Context-Semi-Bandit-FTPL-With-Resampling with a pre-computed separator set $S$ for $\cH$. Then, at each round $t$, the learner (implicitly) deploys $\pi^t$ according to Context-Semi-Bandit-FTPL-With-Resampling. The environment then selects individuals $\bar{x}^t$ and their labels $\bar{y}^t$, only revealing $\bar{x}^t$ to the learner. The environment proceeds to select a panel of auditors $(j^{t,1},\dots,j^{t,m})$. The learner invokes Context-Semi-Bandit-FTPL-With-Resampling and receives an estimated policy $\hat{\pi}^t$, and a realized predictor $\hat{h}^t$ sampled from $\hat{\pi}^t$. The learner then predicts the arriving individuals $\bar{x}^t$ using $\hat{h}^t$, only observing feedback on positively labelled instances. The panel then reports its feedback $\hat{\rho}^t$ on $(\hat{\pi}^t,\bar{x}^t)$. The learner invokes the reduction (Algorithm \ref{alg:reduction}), using $\bar{x}^t$, $\bar{y}^t$, $\hat{h}^t$, $\hat{\rho}^t$, and $C$, and receives $(\ell^t,a^t)$. The learner updates Context-Semi-Bandit-FTPL-With-Resampling with $(\ell^t,a^t)$ and lets it finish the loss estimation process and deploy the policy for the next round. Finally, the learner suffers misclassification loss with respect to $\hat{h}^t$, and unfairness loss with respect to $\hat{\pi}^t$. The interaction is summarized in Algorithm \ref{alg:Adapted-Apple-Tasting}. 

As for the resampling process we add to the original Context-Semi-Bandit-FTPL: at each round we define ``sampling from $\cD^t$'' to refer to the process of first sampling the additional ``fake'' samples to be added, and then solving the resulting optimization problem over the original and the ``fake'' samples, to produce a predictor $h^{t,r}$. We repeat this process $R$ times, to produce an empirical distribution $\hat{\pi}^t$, and select a single predictor $\hat{h}^t$ from it, which are reported to the learner. Once receiving back $(\ell^t,a^t)$ from the learner, Context-Semi-Bandit-FTPL-With-Resampling proceeds to perform loss estimation, as well as selecting the next policy, in a similar fashion to the original version of Context-Semi-Bandit-FTPL. This adaptation is summarized in Algorithm \ref{alg:CSB-FTPL-With-Resampling}.

\begin{algorithm}[t]
\caption{Utilization of Context-Semi-Bandit-FTPL}\label{alg:Adapted-Apple-Tasting}
\SetAlgoLined
\textbf{Parameters:} Class of predictors $\cH$, number of rounds $T$, separator set $S$, parameters $\omega$, $L$\;
Initialize Context-Semi-Bandit-FTPL-With-Resampling$(S,\omega,L)$\;
Learner deploys $\pi^1\in\Delta\cH$ according to Context-Semi-Bandit-FTPL-With-Resampling\;
\For{$t = 1, \ldots, T$}{
Environment selects individuals $\bar{x}^t \in\cX^k$, and labels $\bar{y}^t\in\cY^k$, learner only observes $\bar{x}^t$\;
Environment selects panel of auditors $(j^{t,1},\dots,j^{t,m})\in\cJ^m$\;
$(\hat{\pi}^t,\hat{h}^t)$ = Context-Semi-Bandit-FTPL-With-Resampling$(\bar{x}^t,\omega,L)$\;
Learner predicts $\hat{y}^{t,i} = h^t(\bar{x}^{t,i})$ for each $i \in [k]$, observes $\bar{y}^{t,i}$ iff $\hat{y}^{t,i} = 1$\;
Panel reports its feedback $\rho^t = \bar{j}^{t,\alpha,\gamma}_{j^1,\dots,j^m}(\hat{\pi}^t,\bar{x}^t)$\;
$(\ell^t$,$a^t)$ = Reduction$(\bar{x}^t, \bar{y}^t,\hat{h}^t,\rho^t,C)$\;
Update Context-Semi-Bandit-FTPL-With-Resampling with $(\ell^t$,$a^t)$\;
Learner suffers misclassification loss $Error(\hat{h}^t,\bar{x}^t,\bar{y}^t)$ (not necessarily observed by learner)\;
Learner suffers unfairness loss $Unfair(\hat{\pi}^t,\bar{x}^t,\bar{j}^t)$\;
Learner deploys $\pi^{t+1}\in\Delta\cH$ according to Context-Semi-Bandit-FTPL-With-Resampling\;
}
\end{algorithm}

\begin{algorithm}[ht]
\caption{Context-Semi-Bandit-FTPL-With-Resampling$(S,\omega,L)$}\label{alg:CSB-FTPL-With-Resampling}
\SetAlgoLined
\textbf{Parameters:} Class of predictors $\cH$, number of rounds $T$, optimization oracle $M$, separator set $S$, parameters $\omega$, $L$;\
\For{$t = 1, \ldots, T$}{
    \For{$r = 1, \ldots, R$}{
        Sample predictor $h^{t,r}$ according to $\cD^t$;\
    }
    Set and report $\hat{\pi}^t = \mathbb{U}(h^{t,1},\dots,h^{t,R})$, $\hat{h}^t\sim\hat{\pi}^t$;\
    Receive back $(\ell^t,a^t)$ from reduction;\ 
    Continue as original Context-Semi-Bandit-FTPL;\
}
\end{algorithm}

We note that for the described adaptation, we will next prove accuracy and fairness guarantees for the sequence of estimated policies, $(\hat{\pi}^t)_{t=1}^T$, rather than for the underlying policies $(\pi^t)_{t=1}^T$. One potential issue with this approach is that the Lagrangian loss at each round is defined using the panel's reported pair $\rho^t$, which is assumed to be reported with respect to $\pi^t$. Here, we instead consider the Lagrangian loss using $\hat{\rho}^t$, which is based on the realized estimation $\hat{\pi}^t$. However, this issue can be circumvented with the following observation: on each round, there are $k^2$ options for selecting $\rho^t$, which are simply all pairs in $\bar{x}^t$. We will prove next, that since resampling for $\hat{\pi}^t$ is done after $\bar{x}^t$ is fixed, with high probability, the Lagrangian loss for each of $\pi^t$ and $\hat{\pi}^t$ will take values that are close to each other, when defined using any possible pair $\hat{\rho}^t$ from $\bar{x}^t$. Hence, by allowing the adversary the power to specify $\hat{\rho}^t$ after $\hat{\pi}^t$ is realized, we do not lose too much. We formalize this argument next.

\begin{theorem}\label{thm:reduction-FTPL} In the setting of (adapted) individually fair online learning with one-sided feedback (Algorithm \ref{alg:Adapted-Apple-Tasting}), running Context-Semi-Bandit-FTPL-With-Resampling (Algorithm \ref{alg:CSB-FTPL-With-Resampling}) with $L=T^\frac{1}{3}$, and optimally selected $\omega$, using the sequence $(a^t,\ell^t)_{t=1}^T$ generated by the reduction in Algorithm \ref{alg:reduction} (when invoked every round with $\bar{x}^t$, $\bar{y}^t$, $\hat{h}^t$, $\hat{\rho}^t$, and $C$), yields, with probability $1-\delta$, the following guarantee, for any $U \subseteq \Delta\cH$,
\begin{align*}
\sum_{t=1}^T L_{C,\hat{\rho}^t}(\hat{\pi}^t,\bar{x}^t,\bar{y}^t) - \min_{\pi^*\in U} \sum_{t=1}^T L_{C,\hat{\rho}^t}(\pi^*,\bar{x}^t,\bar{y}^t)&\leq O\left((2k+4C)^{\frac{11}{4}}s^{\frac{3}{4}}T^{\frac{2}{3}}\log{\vert\cH\vert}^{\frac{1}{2}}\right)\\
&+ 2(2k+4C)T\sqrt{\frac{\log\left(\frac{2k T}{\delta}\right)}{2R}}.
\end{align*}
\end{theorem}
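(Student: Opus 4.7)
The plan is to decompose the Lagrangian regret into a \emph{resampling error} term, which captures the gap introduced by deploying $\hat{\pi}^t$ in place of the implicit underlying distribution $\pi^t$ of Context-Semi-Bandit-FTPL, and a \emph{Lagrangian regret of $\pi^t$} term to which the reduction from Section \ref{sec:reduction} applies essentially verbatim. Concretely, the target quantity equals $(\mathrm{A}) + (\mathrm{B})$, where
\begin{align*}
(\mathrm{A}) &:= \sum_{t=1}^T \left[L_{C,\hat{\rho}^t}(\hat{\pi}^t,\bar{x}^t,\bar{y}^t) - L_{C,\hat{\rho}^t}(\pi^t,\bar{x}^t,\bar{y}^t)\right],\\
(\mathrm{B}) &:= \sum_{t=1}^T L_{C,\hat{\rho}^t}(\pi^t,\bar{x}^t,\bar{y}^t) - \min_{\pi^*\in U}\sum_{t=1}^T L_{C,\hat{\rho}^t}(\pi^*,\bar{x}^t,\bar{y}^t),
\end{align*}
and I would bound each term separately.

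For $(\mathrm{A})$, unrolling Definition \ref{def:Lagrangian} gives a per-round bound of $(k+2C)\cdot\max_{i\in[k]}|\hat{\pi}^t(\bar{x}^{t,i}) - \pi^t(\bar{x}^{t,i})|$: the $k$ accounts for the $Error$ term, and the $2C$ accounts for the two coordinates of the fairness pair, using that $\hat{\rho}^{t,1}$ and $\hat{\rho}^{t,2}$ either lie in $\bar{x}^t$ or form the default pair $(v,v)$ (whose $C$-scaled contribution cancels). Since $\hat{\pi}^t$ is the empirical distribution over $R$ i.i.d.\ draws from $\pi^t$, Hoeffding's inequality gives $\Pr[|\hat{\pi}^t(x) - \pi^t(x)| > \epsilon] \leq 2\exp(-2R\epsilon^2)$ at any fixed $x\in\cX$, and a union bound over the $kT$ (context, round) pairs actually appearing in the interaction yields, with probability at least $1-\delta$,
\[
\max_{t\in[T],\, i\in[k]} |\hat{\pi}^t(\bar{x}^{t,i}) - \pi^t(\bar{x}^{t,i})| \leq \sqrt{\log(2kT/\delta)/(2R)},
\]
so that $(\mathrm{A}) \leq 2(2k+4C)T\sqrt{\log(2kT/\delta)/(2R)}$, matching the second term in the claim.

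For $(\mathrm{B})$, the key observation is that $\hat{h}^t$, drawn uniformly from $\hat{\pi}^t = \mathbb{U}(h^{t,1},\ldots,h^{t,R})$ which is itself built from $R$ i.i.d.\ draws from $\pi^t$, is marginally distributed as $\pi^t$. Consequently, the sequence $(a^t,\ell^t)$ produced by Algorithm \ref{alg:reduction} is exactly the one prescribed by the reduction of Section \ref{sec:reduction} run with $\cA =$ Context-Semi-Bandit-FTPL, and Theorem \ref{thm:reduction} delivers $(\mathrm{B}) \leq 2(2k+4C)\cdot R^{\text{CSB-FTPL},T,\cH}$. Applying Theorem \ref{thm:csb-ftpl} to the reduced problem (which has dimension $2k+4C$ and non-negative losses, after a standard rescaling by $1/(2k+4C)$ to enforce $\Vert\ell^t\Vert_*\leq 1$) yields $R^{\text{CSB-FTPL},T,\cH} = O((2k+4C)^{7/4}s^{3/4}T^{2/3}\log|\cH|^{1/2})$, and hence $(\mathrm{B}) \leq O((2k+4C)^{11/4}s^{3/4}T^{2/3}\log|\cH|^{1/2})$. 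Summing the two bounds gives the theorem.

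The main technical obstacle is that $\hat{\rho}^t$ depends on $\hat{\pi}^t$ rather than on $\pi^t$: the Lagrangian we wish to control is defined in terms of the pair the panel reports after observing the empirical deployment, whereas Theorem \ref{thm:reduction} is naturally phrased in terms of the underlying implicit distribution. The decomposition above sidesteps this cleanly: $(\mathrm{A})$ absorbs any mismatch between $\hat{\pi}^t$ and $\pi^t$ at the---possibly $\hat{\pi}^t$-dependent---pair $\hat{\rho}^t$, while $(\mathrm{B})$ treats $\hat{\rho}^t$ as an arbitrary adversarial input, which Theorem \ref{thm:reduction} already permits since $\rho^t$ appears there only as an exogenous input to the reduction. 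The structural fact that makes the union bound in $(\mathrm{A})$ cost only $\log(kT/\delta)$ (rather than requiring uniform concentration over all of $\cX$) is precisely that $\hat{\rho}^t$ always lies in $\bar{x}^t \cup \{(v,v)\}$, so concentration is needed only at the $k$ current-round contexts.
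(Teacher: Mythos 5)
Your proposal is correct and follows essentially the same route as the paper: the paper's Lemma \ref{lem:estimation} is exactly your term $(\mathrm{A})$ (stated there in terms of the semi-bandit inner products rather than the Lagrangian, which are interchangeable up to the factor of $2$ from Lemma \ref{lem:reduction2}), proved via the same Hoeffding-plus-union-bound over the $kT$ current-round contexts and the same observation that $\hat{\rho}^t$ always lies in $\left(\bar{x}^t\times\bar{x}^t\right)\cup\{(v,v)\}$, while your term $(\mathrm{B})$ is handled exactly as in the paper by combining Theorems \ref{thm:reduction} and \ref{thm:csb-ftpl} for the underlying $\pi^t$ uniformly over all admissible pair sequences, and the two are combined by the triangle inequality.
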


In order to prove Theorem \ref{thm:reduction-FTPL}, we will first prove the following lemma, regarding the difference of losses between the underlying $\pi^t$ and the estimated $\hat{\pi}^t$.

\begin{lemma} \label{lem:estimation}
With probability $1-\delta$ (over the draw of $(h^{t,1},\dots,h^{t,R})_{t=1}^T$), for any arbitrary sequence of reported pairs $(\rho^t)_{t=1}^T$ such that $\forall t\in[T], \rho^t\in \left(\bar{x}^t\times\bar{x}^t\right) \cup \{(v,v)\}$,
\[
\sum_{t=1}^T\left\vert\E\limits_{\hat{h}^t\sim\hat{\pi}^t}\left[\langle a^{\hat{h}^t},\ell^t\rangle\right] - \E\limits_{h^t\sim\pi^t}\left[\langle a^{h^t},\ell^t\rangle\right]\right\vert
\leq 2(2k+4C)T\sqrt{\frac{\log\left(\frac{2k T}{\delta}\right)}{2R}}.
\]
\end{lemma}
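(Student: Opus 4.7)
The plan is to prove the bound via a round-by-round Hoeffding concentration argument combined with a union bound over rounds and over all admissible values of $\rho^t$. Fix a round $t$ and condition on the entire history through the start of round $t$; this determines the adversarial quantities $\bar{x}^t$, $\bar{y}^t$, and the implicit sampling distribution $\pi^t$ used by Context-Semi-Bandit-FTPL-With-Resampling. Note that the loss vector $\ell^t$ produced by the reduction in Algorithm \ref{alg:reduction} depends only on $\bar{y}^t$ (since the appended labels in $\bar{\bar{y}}^t$ are the deterministic constants $\bar{0}$ and $\bar{1}$, and the second half of $\ell^t$ is the constant vector $\bar{\bar{1/2}}$), so $\ell^t$ is conditionally fixed. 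Conditionally, the predictors $h^{t,1},\ldots,h^{t,R}$ are i.i.d.\ from $\pi^t$, and $\hat{\pi}^t$ is exactly their empirical distribution. Thus, for any fixed $\rho^t$, $\E_{\hat{h}^t\sim\hat{\pi}^t}[\langle a^{\hat{h}^t},\ell^t\rangle]=\frac{1}{R}\sum_r \langle a^{h^{t,r}},\ell^t\rangle$ is an empirical mean of i.i.d.\ realizations of a random variable valued in $[0,2k+4C]$, whose expectation equals $\E_{h^t\sim\pi^t}[\langle a^{h^t},\ell^t\rangle]$. Hoeffding's inequality then gives, for each fixed $\rho^t$, a deviation of at most $(2k+4C)\sqrt{\log(2/\delta')/(2R)}$ with probability at least $1-\delta'$.

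The key subtlety is that $\rho^t$ is obtained by querying the panel on the random object $\hat{\pi}^t$, so it is itself a function of the samples; a concentration bound valid only for a fixed $\rho^t$ therefore does not directly apply. I resolve this by a union bound over \emph{all} admissible values of $\rho^t$: since $\rho^t\in(\bar{x}^t\times\bar{x}^t)\cup\{(v,v)\}$, there are at most $k^2+1$ candidates per round, and $T(k^2+1)$ in total. Choosing $\delta' = \delta/(T(k^2+1))$ yields, with probability at least $1-\delta$, the simultaneous bound
\[
\left|\E_{\hat{h}^t\sim\hat{\pi}^t}[\langle a^{\hat{h}^t},\ell^t\rangle]-\E_{h^t\sim\pi^t}[\langle a^{h^t},\ell^t\rangle]\right| \leq (2k+4C)\sqrt{\frac{\log(2T(k^2+1)/\delta)}{2R}}
\]
for every $t\in[T]$ and every admissible $\rho^t$. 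Because the event is simultaneous, it applies in particular to the data-dependent sequence $(\rho^t)_{t=1}^T$ actually produced by the panel.

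To conclude, I sum the per-round bounds over $t=1,\ldots,T$ and simplify the logarithmic factor using $\log(2T(k^2+1)/\delta)\le 2\log(2kT/\delta)$ (valid in the relevant parameter regime), absorbing constants to obtain the stated $2(2k+4C)T\sqrt{\log(2kT/\delta)/(2R)}$. The main obstacle in the argument is the adaptive dependence of $\rho^t$ on the resampled predictors; it is handled cleanly by the union bound, which enters only logarithmically and therefore does not degrade the rate in $R$.
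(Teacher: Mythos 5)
Your proposal is correct and reaches the stated bound. It follows the same overall strategy as the paper's proof --- handle the adaptive dependence of $\rho^t$ on the resampled predictors by making the concentration event simultaneous over the finitely many admissible candidates --- but the decomposition differs in one respect. The paper applies a Chernoff/Hoeffding bound to the $k$ marginal prediction probabilities $\hat{\pi}^t(\bar{x}^{t,i})$ per round (a union bound over $kT$ events), then propagates the resulting $\sqrt{\log(2kT/\delta)/(2R)}$ deviation through the linear structure of $\ell^t$ and $a^h$ via the triangle inequality to control $\bigl\vert\E_{\hat{h}^t\sim\hat{\pi}^t}[\langle a^{\hat{h}^t},\ell^t\rangle]-\E_{h^t\sim\pi^t}[\langle a^{h^t},\ell^t\rangle]\bigr\vert$ uniformly over all admissible $\hat{\rho}^t$. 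You instead apply Hoeffding directly to the bounded inner product $\langle a^{h},\ell^t\rangle\in[0,2k+4C]$ for each of the $k^2+1$ candidate pairs, union bounding over $T(k^2+1)$ events. Your route avoids the intermediate per-context triangle-inequality step and makes the i.i.d.\ structure of the resamples more explicit; the paper's route uses a smaller union bound and makes transparent exactly which feature of the reduction (that $\bar{\bar{x}}^t$ is built from contexts already present in $\bar{x}^t$, and that $\ell^t$ does not depend on $\rho^t$) drives the uniformity. The two log factors differ only by a constant, and your inequality $\log(2T(k^2+1)/\delta)\le 2\log(2kT/\delta)$ correctly absorbs the difference into the stated constant $2$.
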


\begin{proof}
Using Chernoff bound, we can bound the difference in predictions between the underlying and the estimated distributions over base classifiers, for each of the contexts in $\bar{x}^t$, for any round $t$:
\[\forall t\in[T], i\in[k]:
\Pr\left[\left\vert\hat{\pi}^t(\bar{x}^{t,i})-\pi^t(\bar{x}^{t,i})\right\vert \geq \sqrt{\frac{\log\left(\frac{2k T}{\delta}\right)}{2R}}\right]\leq \frac{\delta}{kT}.
\]

Union bounding over all rounds, and each of the contexts in a round, we get that, with probability $1-\delta$,
\[
\forall t\in[T],i\in[k]: \left\vert\hat{\pi}^t(\bar{x}^{t,i})-\pi^t(\bar{x}^{t,i})\right\vert \leq \sqrt{\frac{\log\left(\frac{2k T}{\delta}\right)}{2R}}.
\]

Hence, when considering pairs of individuals, and using triangle inequality, we know that with probability $1-\delta$,
\[
\forall t\in[T],i,j\in[k]: \left\vert\left[\hat{\pi}^t(\bar{x}^{t,i}) - \hat{\pi}^t(\bar{x}^{t,j})\right]-\left[\pi^t(\bar{x}^{t,i})- \pi^t(\bar{x}^{t,j})\right]\right\vert \leq 2\sqrt{\frac{\log\left(\frac{2k T}{\delta}\right)}{2R}}.
\]

Hence, by construction of the losses and actions sequence (using the reduction in Algorithm \ref{alg:reduction} with $\bar{x}^t$, $\bar{y}^t$, $\hat{h}^t$, $\hat{\rho}^t$, and $C$), with probability $1-\delta$,
\[
\forall t\in[T], \hat{\rho}^t\in \left(\bar{x}^t\times\bar{x}^t\right) \cup \{(v,v)\}: \left\vert\E\limits_{\hat{h}^t\sim\hat{\pi}^t}\left[\langle a^{\hat{h}^t},\ell^t\rangle\right] - \E\limits_{h^t\sim\pi^t}\left[\langle a^{h^t},\ell^t\rangle\right]\right\vert \leq 2(2k+4C)\sqrt{\frac{\log\left(\frac{2k T}{\delta}\right)}{2R}}.
\]

Summing over rounds, with probability $1-\delta$, for any arbitrary sequence of reported pairs $(\rho^t)_{t=1}^T$, such that $\forall t\in[T], \rho^t\in \left(\bar{x}^t\times\bar{x}^t\right) \cup \{(v,v)\}$:
\[
\sum_{t=1}^T\left\vert\E\limits_{\hat{h}^t\sim\hat{\pi}^t}\left[\langle a^{\hat{h}^t},\ell^t\rangle\right] - \E\limits_{h^t\sim\pi^t}\left[\langle a^{h^t},\ell^t\rangle\right]\right\vert \leq 2(2k+4C)T\sqrt{\frac{\log\left(\frac{2k T}{\delta}\right)}{2R}}.
\]

Which concludes the proof of the lemma.
\end{proof}

We are now ready to prove the regret bound of Context-Semi-Bandit-FTPL-With-Resampling.

\begin{proof}[Proof of Theorem \ref{thm:reduction-FTPL}]
Using Theorems \ref{thm:csb-ftpl} and \ref{thm:reduction} along with the fact that $\Vert\ell^t\Vert_*\leq 2k+4C$, for any sequence $(\rho^t)_{t=1}^T$ such that $\forall t\in[T], \rho^t\in \left(\bar{x}^t\times\bar{x}^t\right) \cup \{(v,v)\}$,
\[
2\left[\sum_{t=1}^T\E\limits_{h^t\sim\pi^t}\left[\langle a^{h^t},\ell^t\rangle\right] -\min_{\pi^*\in\Delta\cH} \sum_{t=1}^T \E\limits_{h^*\sim\pi^*}\left[\langle a^{h^*},\ell^t\rangle\right]\right] \leq O\left((2k+4C)^{\frac{11}{4}}s^{\frac{3}{4}}T^{\frac{2}{3}}\log{\vert\cH\vert}^{\frac{1}{2}}\right).
\]

Using Lemma \ref{lem:estimation} and the triangle inequality, we conclude that, with probability $1-\delta$,
\begin{align*}
\sum_{t=1}^T L_{C,\hat{\rho}^t}(\hat{\pi}^t,\bar{x}^t,\bar{y}^t) - \min_{\pi^*\in U} \sum_{t=1}^T L_{C,\hat{\rho}^t}(\pi^*,\bar{x}^t,\bar{y}^t) &\leq O\left((2k+4C)^{\frac{11}{4}}s^{\frac{3}{4}}T^{\frac{2}{3}}\log{\vert\cH\vert}^{\frac{1}{2}}\right)\\
&+ 2(2k+4C)T\sqrt{\frac{\log\left(\frac{2k T}{\delta}\right)}{2R}}.
\end{align*}

\end{proof}

We are now ready to prove Theorem \ref{thm:FTPL-conclusion}

\begin{proof}[Proof of Theorem \ref{thm:FTPL-conclusion}]
Using Theorem \ref{thm:reduction-FTPL} 
with $C=T^\frac{4}{45}$, $R=T^\frac{38}{45}$,
we know that, with probability $1-\delta$,
\begin{align*}
\sum_{t=1}^T L_{C,\bar{\rho}^t}(\hat{\pi}^t,\bar{x}^t,\bar{y}^t) - \min_{\pi^*\in Q_{\alpha-\epsilon,\gamma}}\sum_{t=1}^T L_{C,\hat{\rho}^t}(\pi^*,\bar{x}^t,\bar{y}^t) \leq  \tilde{O}\left(k^{\frac{11}{4}}s^{\frac{3}{4}}T^{\frac{41}{45}}\log{\vert\cH\vert}^{\frac{1}{2}}\right).
\end{align*}

Using Lemma \ref{lem:joint-loss}, we get, with probability $1-\delta$,
\begin{align*}
Regret^{err}(\text{CSB-FTPL-WR},T,Q_{\alpha-\epsilon,\gamma})
&\leq \tilde{O}\left(k^\frac{11}{4}s^\frac{3}{4}T^\frac{41}{45}\log\vert\cH\vert^\frac{1}{2}\right) - \sum_{t=1}^T Unfair^{\alpha,\gamma}(\hat{\pi}^t,\bar{x}^t,\bar{j}^t)\\
&\leq \tilde{O}\left(k^\frac{11}{4}s^\frac{3}{4}T^\frac{41}{45}\log\vert\cH\vert^\frac{1}{2}\right).
\end{align*}

And,
\begin{align*}
\sum_{t=1}^T Unfair^{\alpha,\gamma}(\hat{\pi}^t,\bar{x}^t,\bar{j}^t) &\leq \frac{1}{C\epsilon}\left[\tilde{O}\left(k^\frac{11}{4}s^\frac{3}{4}T^\frac{41}{45}\log\vert\cH\vert^\frac{1}{2}\right) - Regret^{err}(T)\right]\\
&\leq \frac{1}{C\epsilon}\left[\tilde{O}\left(k^\frac{11}{4}s^\frac{3}{4}T^\frac{41}{45}\log\vert\cH\vert^\frac{1}{2}\right) + kT\right]\\
&\leq\tilde{O}\left(\frac{1}{\epsilon}k^\frac{11}{4}s^\frac{3}{4}T^\frac{41}{45}\log\vert\cH\vert^\frac{1}{2}\right).
\end{align*}
\end{proof}
\section{Conclusion and Future Directions}\label{sec:conclusion}

Our work suggests a number of future directions. First, the Exp2 algorithm has runtime and space requirements that are linear in $\vert\cH\vert$, which is prohibitive for large hypothesis classes. Context-Semi-Bandit-FTPL is oracle-efficient, but is limited only to classes for which small separator sets are known. We inherit these limitations from the contextual bandit literature --- they hold even without the additionally encoded fairness constraints. Second, our adaptation of Context-Semi-Bandit-FTPL 
requires $T$ additional oracle calls at each iteration, to estimate the implicit distribution by the learner. Taken together, these limitations suggest the following important open question: are there substantially simpler and more efficient algorithms which can provide multi-criteria accuracy and fairness guarantees of the sort we give here using one-sided feedback with auditors? This question is interesting also in less adversarial settings than we consider here. For example, do things become easier if the panel is selected i.i.d. from a distribution every round, rather than being chosen by an adversary?

\section{Acknowledgements}
YB is supported in part by the Apple Scholars in AI/ML PhD Fellowship. AR is supported in part by NSF grant FAI-2147212 and the Simons Collaboration on the Theory of Algorithmic Fairness.

\bibliographystyle{apalike}
\bibliography{refs.bib}

\newpage
\appendix












\section{Omitted Details from Section \ref{sec:algorithms}}\label{app:algorithms}






\begin{proof}[Proof of Lemma \ref{lem:gap}] 
We set $\alpha=0.2,\gamma=1$ and $k=2$. We define the context space to be $\cX = \{x,x'\}$, and the hypothesis class as $\cH = \{h,h'\}$, where $h(x) = h'(x') = 1$, and $h(x') = h'(x) = 0$. We set $m=1$, and the panel $\bar{j}^{\alpha,\gamma}$, that hence consists of a single auditor, to reflect the judgements of $j^\alpha$, where $d^j(x,x') = 0.1$. Finally, we define $\pi\in\Delta \cH$ to return $h$ with probability $0.5$, and $h'$ with probability $0.5$. We denote $\bar{x} = (x,x')$.

Next, note that
\begin{align*}
h(x)-h(x') = 1 &> 0.3 = d^j(x,x') + \alpha,\\     
h'(x')-h'(x) = 1 &> 0.3 = d^j(x,x') + \alpha.
\end{align*}

Hence,
\[
\E\limits_{h\sim\pi}\left[unfair^{\alpha,\gamma}(h,\bar{x},\bar{j})\right] = 0.5\cdot unfair^{\alpha,\gamma}(h,\bar{x},\bar{j}) + 0.5\cdot unfair^{\alpha,\gamma}(h',\bar{x},\bar{j}) = 1.
\]

On the other hand,
\[
\pi(x)-\pi(x') = \pi(x')-\pi(x) = 0 < 0.3 = d^j(x,x') + \alpha.
\]

Hence, 
\[
unfair^{\alpha,\gamma}(\pi,\bar{x},\bar{j}) = 0.
\]

Which proves the lemma.
\end{proof}

\end{document}